\PassOptionsToPackage{table}{xcolor}
\documentclass{article}

\usepackage[preprint]{neurips_2026}

\usepackage[most]{tcolorbox}

\tcbset{
  theoremstyle/.style={
    enhanced,
    colback=gray!8,
    colframe=gray!40,
    boxrule=0pt,
    arc=3pt,
    left=6pt,
    right=6pt,
    top=6pt,
    bottom=6pt,
    boxsep=0pt,
    breakable
  }
}

\newtcbtheorem[number within=section]{propositionbox}{Proposition}%
{theoremstyle}{prop}
\usepackage{amssymb}
\usepackage[utf8]{inputenc} %
\usepackage[T1]{fontenc}    %
\PassOptionsToPackage{final,pagebackref=true}{hyperref}
\usepackage{hyperref}       %
\usepackage{url}            %
\usepackage{booktabs}       %
\usepackage{graphicx}       %
\usepackage{wrapfig}        %
\usepackage{amsmath}        %
\usepackage{amsfonts}       %
\usepackage{amsthm}         %
\usepackage{nicefrac}       %
\usepackage{microtype}      %
\usepackage{xcolor}  %
\usepackage{multirow}
\usepackage{threeparttable}

\colorlet{tablegray}{black!8}

\usepackage{siunitx}
\usepackage{array}
\usepackage{makecell}
\usepackage{algorithm}
\usepackage{algpseudocode}
\usepackage{float}
\usepackage{xspace}

\hypersetup{
	colorlinks=true,       %
	linkcolor=blue,        %
	citecolor=blue,        %
	filecolor=magenta,     %
	urlcolor=blue
}

\newcolumntype{L}[1]{>{\raggedright\arraybackslash}p{#1}}

\newtheorem{proposition}{Proposition}

\makeatletter
\newcommand{\appendixcontents}{%
  \begingroup
  \section*{Appendix Contents}
  \small
  \vspace{-0.5em}
  \setlength{\parskip}{0pt}
  \@starttoc{apc}
  \endgroup
  \newpage
}
\newcommand*\l@appendixsection{\@dottedtocline{1}{0em}{2.4em}}
\newcommand*\l@appendixsubsection{\@dottedtocline{2}{1.5em}{3.2em}}
\newcommand{\startappendixcontents}{%
  \let\paper@section\section
  \let\paper@subsection\subsection
  \renewcommand{\section}{\@ifstar{\paper@section*}{\paper@app@section}}%
  \renewcommand{\subsection}{\@ifstar{\paper@subsection*}{\paper@app@subsection}}%
}
\newcommand{\paper@app@section}[2][]{%
  \if\relax\detokenize{#1}\relax
    \paper@section{#2}%
    \addcontentsline{apc}{appendixsection}{\protect\numberline{\thesection}#2}%
  \else
    \paper@section[#1]{#2}%
    \addcontentsline{apc}{appendixsection}{\protect\numberline{\thesection}#1}%
  \fi
}
\newcommand{\paper@app@subsection}[2][]{%
  \if\relax\detokenize{#1}\relax
    \paper@subsection{#2}%
    \addcontentsline{apc}{appendixsubsection}{\protect\numberline{\thesubsection}#2}%
  \else
    \paper@subsection[#1]{#2}%
    \addcontentsline{apc}{appendixsubsection}{\protect\numberline{\thesubsection}#1}%
  \fi
}
\makeatother

\newcommand{\MDLM}{MDLM~\citep{sahoo2024mdlm}\xspace}
\newcommand{\UDLM}{UDLM~\citep{schiff2025discreteguidance}\xspace}
\newcommand{\PAIRFLOW}{PAIRFLOW~\citep{park2026pairflow}\xspace}
\newcommand{\DCD}{DCD~\citep{sahoo2025diffusionduality}\xspace}
\newcommand{\ReDi}{ReDi~\citep{yoo2025redi}\xspace}
\newcommand{\CFM}{CFM~\citep{roos2026categoricalflowmaps}\xspace}
\newcommand{\DiscFM}{Discrete Flow Maps~\citep{potaptchik2026discreteflowmaps}\xspace}
\newcommand{\FlowMapLM}{FMLM~\citep{lee2026flm}\xspace}

\newcommand{\SFMOT}{SFM w/ OT~\citep{cheng2024sfm}\xspace}
\newcommand{\DirichletFM}{DirichletFM~\citep{stark2024dirichletfm}\xspace}
\newcommand{\SLM}{SLM~\citep{song2025slm}\xspace}
\newcommand{\FisherFlow}{Fisher-Flow~\citep{davis2024fisherflow}\xspace}

\newcommand{\FMLM}{FMLM~\citep{lee2026flm}\xspace}

\newcommand{\DDSM}{DDSM~\citep{avdeyev2023ddsm}\xspace}
\newcommand{\DThreePM}{D3PM~\citep{austin2021d3pm}\xspace}
\newcommand{\DFM}{DFM~\citep{gat2024discrete}\xspace}

\newcommand{\namelong}{Coupling Model\xspace}
\newcommand{\nameshort}{\emph{Coupling Model}\xspace}
\newcommand{\namelongs}{Coupling Models\xspace}
\newcommand{\nameshorts}{Coupling Models\xspace}
\newcommand{\xhdr}[1]{{\noindent\bfseries #1}.}
\newcommand{\cut}[1]{}

\title{Coupling Models for One-Step Discrete Generation}

\author{
Fred Zhangzhi Peng\thanks{Correspondence to: \texttt{zhangzhi.peng@duke.edu}}\\
Duke University\\
\And
Avishek Joey Bose\\
Imperial College London\\
\AND
Anru R. Zhang\\
Duke University\\
\And
Alexander Tong\\
AITHYRA
}

\begin{document}

\maketitle

\begin{abstract}
\looseness=-1
Generative modeling over discrete structures underpins applications across deep learning, from biological sequence design and code generation to large language models, yet generation often remains sequential, relying on autoregressive decoding or iterative refinement. In this work, we introduce \emph{\namelongs} (\nameshorts), a one-step discrete generative model that learns a direct coupling between discrete sequences and Gaussian latents. Unlike recent distillation methods that compress a pretrained multi-step sampler into a few steps, \nameshort trains a purpose-built decoder to invert this coupling and generate samples in a single step. The model also avoids complex continuous flows over the simplex and hand-specified data-to-noise couplings. Empirically, \nameshort improves the strongest one-step baselines in each domain: it reduces LM1B text-generation perplexity by 33\% at its lowest-perplexity operating point, Fly Brain enhancer-design FBD by 18\%, and MNIST-Binary FID by 46\%. These results suggest that effective one-step discrete generation depends strongly on how data and noise are coupled before decoding.
Code is available at \url{https://github.com/pengzhangzhi/Coupling-Models}.

\end{abstract}
\vspace{-1.0em}
\begin{figure}[H]
  \centering
  \includegraphics[width=0.60\linewidth]{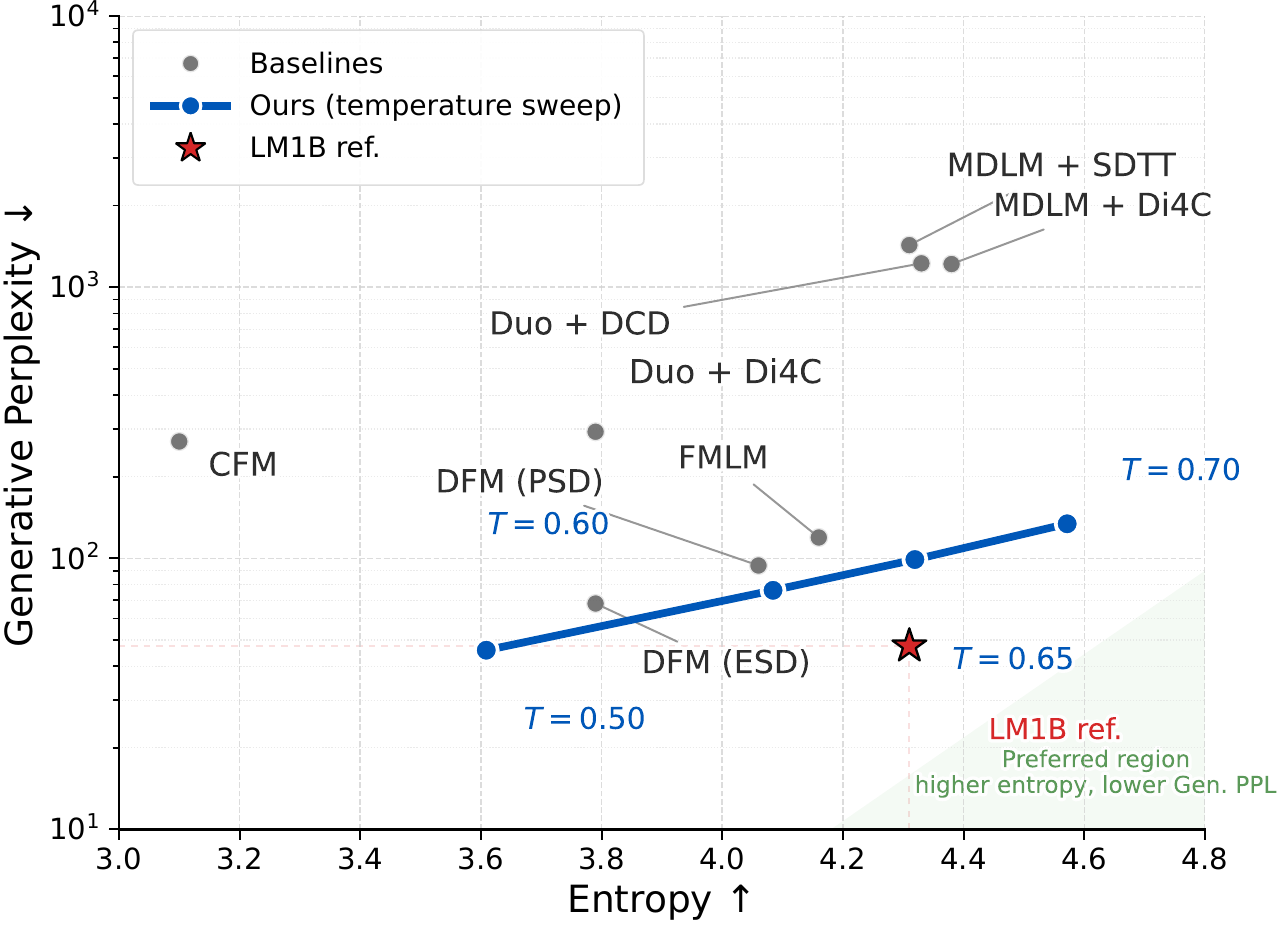}
  \caption{
    \looseness=-1 \textbf{Toward a one-step language generation quality--diversity frontier.}
    On LM1B, \nameshort shifts the one-step entropy--perplexity frontier toward the desired low-perplexity, high-entropy region: it improves over prior one-step baselines in generative perplexity while preserving entropy close to the real-data reference.
    }
  \label{fig:lm1b-entropy-genppl-tradeoff}
\end{figure}

\section{Introduction}
\label{sec:introduction}

\looseness=-1
Discrete generation is central to language modeling, code generation, biological sequence design, and discrete image modeling~\citep{brown2020language,dauparas2022robust,peng2025pathplanning,stark2024dirichletfm,davis2024fisherflow}. The strongest models in these domains still rely on inference procedures that use multiple model evaluations to represent complex joint structure. For instance, autoregressive models factorize the sequence distribution into next-token conditionals and remain the dominant paradigm for modern large language models~\citep{vaswani2017attention,brown2020gpt3,touvron2023llama2,yang2024qwen2}. Although this factorization is expressive and scalable, generation is inherently serial. Discrete diffusion, masked refinement, and discrete flow models offer a more parallel alternative by updating many positions simultaneously, but they also require repeated steps to achieve higher levels of coherence~\citep{austin2021d3pm,sahoo2024mdlm,lou2024sedd,zheng2024reparameterized,nie2025scalingmdm,gong2025diffullama,schiff2025discreteguidance,gat2024discrete,cheng2024sfm,stark2024dirichletfm,davis2024fisherflow}. %

\looseness=-1
The search for faster generation motivates the more extreme setting of \emph{true one-step} discrete generation: a single prior sample coupled with a single model forward pass to produce the full sequence. A successful one-step generator would remove the sequential bottleneck of autoregressive decoding and the repeated refinement cost of discrete diffusion-style models. Toward this goal, recent work has made the regime more feasible through consistency, distillation, self-distillation, or rectification~\citep{sahoo2025diffusionduality,yoo2025redi,deschenaux2025beyond,chen2025dlmone,zhu2025dimo,monsefi2025fsdfm,zheng2026didi,zhang2026t3d,bartosh2026fldd}. Another line improves the low-step trajectory itself through better source--target couplings and discrete flow constructions~\citep{gat2024discrete,park2026pairflow}. A third line moves generation into continuous or simplex-valued surrogate spaces, where categorical variables can be modeled through smoother geometric structure~\citep{cheng2024sfm,stark2024dirichletfm,davis2024fisherflow}.

\looseness=-1
A particularly active recent direction is based on \emph{flow maps}. Rather than simulating a full flow or denoising trajectory, flow-map methods learn endpoint maps that directly predict terminal states. Recent discrete-generation work adapts this idea to categorical data and language. For instance, \DiscFM{} formulates discrete flow maps by aligning endpoint-map training with the geometry of the probability simplex. \FMLM{} learns continuous denoising flows over one-hot token embeddings and distills them into flow maps for efficient language generation. \CFM{} defines categorical flow maps toward the simplex and trains them with self-distillation and endpoint-consistency objectives. Although these methods differ in parameterization, objectives, and empirical domains, they share a common premise: fast discrete generation can be obtained by learning better endpoint maps for continuous or simplex-valued generative trajectories.%
This progress raises a basic question:

\textit{\quad\quad \textbf{Q.} Is trajectory compression the
most natural route to one-step discrete generation?}

\looseness=-1
We argue that the difficulty of one-step discrete generation is not only an optimization problem, but also a representation problem. A direct one-step categorical decoder must recover global cross-token dependence in a single parallel prediction, without the benefit of autoregressive factorization or iterative denoising. As the number of refinement steps drops, the model has fewer opportunities to repair local decisions using global context, and quality can degrade even when the underlying multi-step model is capable~\citep{sahoo2025diffusionduality}. The central issue is therefore not only how to shorten an existing trajectory, but also what representation makes the one-step map easier to learn in the first place.

\looseness=-1
\xhdr{Present work}
We propose \namelongs (\nameshorts), a coupling-based construction for one-step discrete generation. Instead of distilling an iterative generator or learning an endpoint map over a discrete or simplex-valued trajectory, we first learn an easy-to-sample coupling between discrete sequences and Gaussian latents. This coupling induces paired supervision between Gaussian latents and target sequences. We then train a parallel decoder to invert the coupling and emit all output tokens in one forward pass. At inference time, generation requires one step: draw $z\sim\mathcal{N}(0,I)$ and generate.

\looseness=-1
We evaluate \nameshort across three regimes: (1) controlled binary image modeling on MNIST-Binary, (2) biological sequence design on DNA enhancers, and (3) open-ended language generation on LM1B. In the strict one-step setting, \nameshort improves the best available one-step comparisons across all three domains. On MNIST-Binary, it reduces FID to $5.50$, outperforming the strongest one-step baselines in Table~\ref{tab:mnist_binary_main_1col}. On DNA enhancer generation, it improves over the best one-step baseline, distilled Dirichlet FM~\citep{stark2024dirichletfm}, reducing FB FBD from $15.8$ to $12.9\pm0.3$ in Table~\ref{tab:dna_enhancer_clean}. On LM1B, it improves over the flow-map-based methods \CFM{}, \DiscFM{}, and \FMLM{} in generative perplexity while maintaining entropy near the real-data reference. The central language-generation result is that \nameshort does not obtain lower perplexity by collapsing diversity; it shifts the one-step entropy--perplexity frontier toward the desired low-perplexity, high-entropy region, as depicted in Figure~\ref{fig:lm1b-entropy-genppl-tradeoff}.
Finally, we demonstrate that beyond unconditional sampling, the same latent interface makes guidance more direct and efficient. More precisely, classifier-free guidance acts on the final logits, classifier guidance optimizes the latent variable, and reward fine-tuning updates the generator, avoiding the long stochastic control path required by multi-step denoising. Empirically, this shorter control path improves the quality--efficiency tradeoff relative to the masked-diffusion baseline while using far fewer function evaluations.

\section{Related Work}
\label{sec:related_work}

\xhdr{Autoregressive, diffusion, and flow-based discrete generation}
Discrete generation is commonly approached through inference procedures that use multiple model evaluations to represent complex joint structure. Autoregressive models factorize the sequence distribution into conditional next-token predictions and remain the dominant paradigm for language and sequence modeling~\citep{vaswani2017attention,brown2020gpt3,touvron2023llama2,yang2024qwen2}. This factorization is expressive and scalable, but generation is inherently serial. Discrete diffusion and masked-refinement models update many positions in parallel, but recover global dependence through repeated denoising or replacement steps~\citep{austin2021d3pm,sahoo2024mdlm,lou2024sedd,zheng2024reparameterized,nie2025scalingmdm,gong2025diffullama,schiff2025discreteguidance}. Related discrete flow and simplex-based methods formulate generation through transport or flow matching over categorical variables or continuous relaxations of the simplex~\citep{gat2024discrete,cheng2024sfm,stark2024dirichletfm,davis2024fisherflow}. Together, these methods define the main speed-quality tradeoff studied in this paper: strong discrete generation typically relies on either serial factorization or iterative refinement. A more detailed discussion appears in Appendix~\ref{app:related_work}.

\xhdr{Fast one-step and few-step generation}
Recent work has made one-step and few-step discrete generation increasingly practical. Distillation, consistency, self-distillation, and rectification methods compress or improve multi-step generators so that fewer model evaluations are required at test time~\citep{sahoo2025diffusionduality,yoo2025redi,deschenaux2025beyond,chen2025dlmone,zhu2025dimo,monsefi2025fsdfm,zheng2026didi,zhang2026t3d,bartosh2026fldd}. Coupling-based methods such as \PAIRFLOW{} improve the low-step coupling itself by constructing better source-target couplings~\citep{park2026pairflow}. A particularly relevant line is flow-map-based generation: \DiscFM{}, \FlowMapLM{}, and \CFM{} learn endpoint maps for continuous or simplex-valued trajectories, enabling faster generation by bypassing long integration or denoising chains~\citep{potaptchik2026discreteflowmaps,lee2026flm,roos2026categoricalflowmaps}. Our method targets the same one-step regime but differs in construction. Rather than distilling an iterative generator, rectifying a discrete trajectory, or learning an endpoint map along a simplex-valued path, we first learn a coupling between continuous sequence representations and Gaussian latents. The resulting latent-sequence pairs provide supervised training data for a parallel one-step decoder. BiFlow is related in its use of normalizing flows to connect data and Gaussian noise bidirectionally, but it operates in a continuous generative setting, whereas we use the learned coupling to induce latent--sequence supervision for one-step discrete decoding~\citep{lu2026bidirectional}. Our method is best viewed as a coupling-based route to one-step discrete generation, rather than another trajectory-compression method.
\begin{figure}[t]
  \centering
  \includegraphics[width=0.72\linewidth]{\detokenize{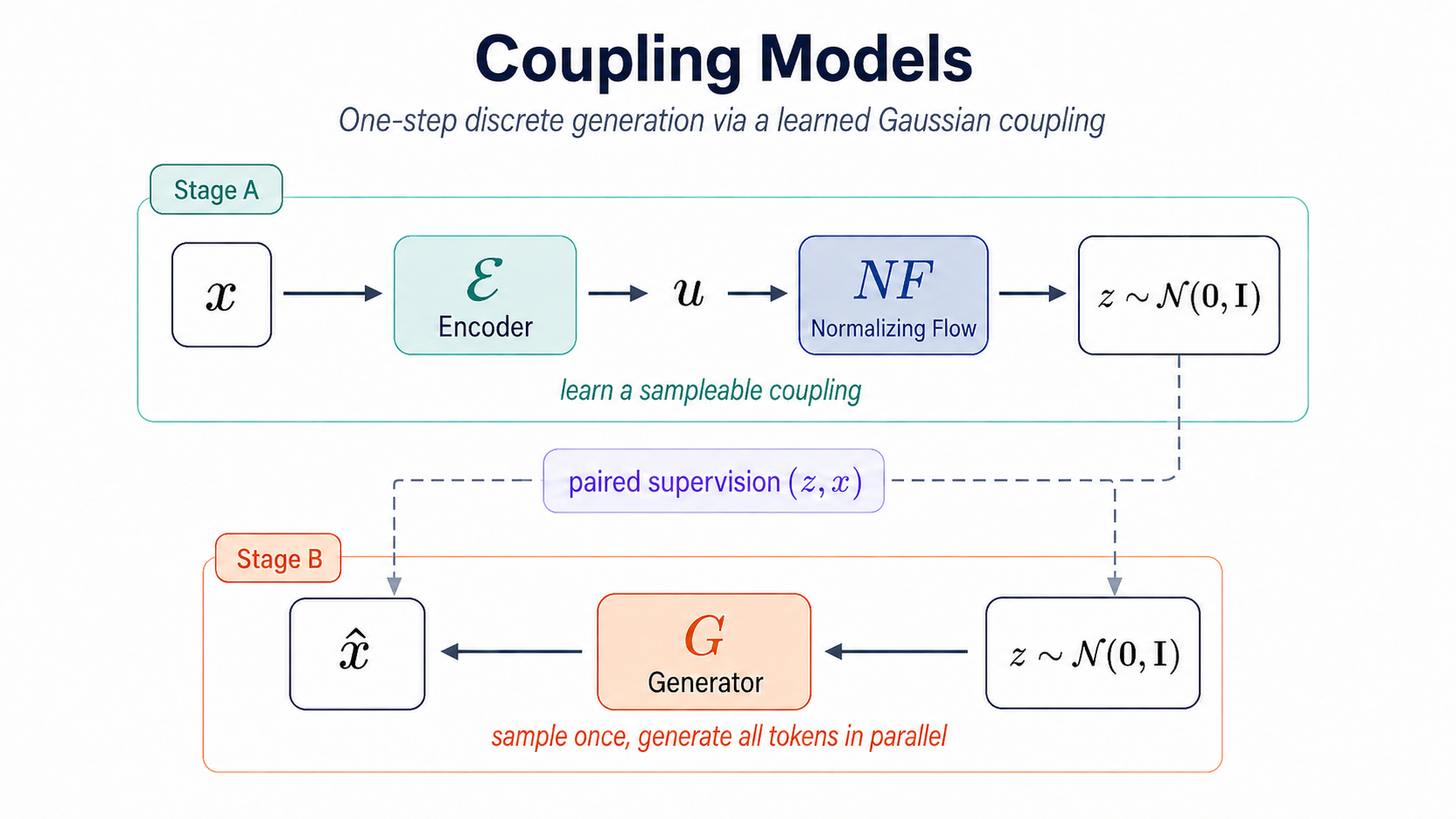}}
  \vspace{-0.7em}
  \caption{\textbf{Overview of the \namelong. }Stage~A learns a coupling from discrete sequences to Gaussian latents. Stage~B trains a parallel decoder on the induced pairs, enabling generation from a single model evaluation.}
  \label{fig:method}
\end{figure}

\section{Method}
\label{sec:method}

\looseness=-1
We study one-step generation of a discrete sequence
$x=(x_1,\ldots,x_T)$, where $x_t\in\{1,\ldots,V\}$. The main challenge is not
only computational, but representational. A direct one-step categorical decoder
predicts one distribution per position and samples all positions in parallel.
Without an additional shared variable or refinement process, this induces a
factorized sequence law, $p(x)=\prod_{t=1}^{T}p_t(x_t)$, which cannot represent
general cross-token dependence. Autoregressive models avoid this barrier through
sequential conditioning, while diffusion and masked-refinement models use
multiple denoising steps to exchange information across positions. The strict
one-step setting removes both mechanisms. Thus, the central question is what
representation can carry global sequence-level information into a single
parallel categorical emission.

\subsection{\namelong for one-step generation}
\label{sec:latent_transport}

\nameshort addresses this question by learning a continuous coupling between
discrete sequences and Gaussian latents. The final decoder still emits token
distributions in parallel, but it conditions all positions on a shared latent
$z$. Hence, although $G_\theta(x\mid z)=\prod_t G_\theta(x_t\mid z)$ factorizes
conditioned on $z$, the marginal
$p_\theta^{\mathrm{gen}}(x)=\int G_\theta(x\mid z)p_Z(z)\,dz$ is a latent mixture
that can carry global dependence.

The method has two stages. Stage~A maps each training sequence to a continuous
representation and uses a likelihood-based normalizing flow to align the induced
latent marginal with $p_Z=\mathcal{N}(0,I)$. This yields paired supervision
$(z,x)$, where $z$ is informative about $x$ and sampleable from the inference
prior. Stage~B trains a parallel decoder to invert these pairs, so inference
reduces to drawing $z\sim\mathcal{N}(0,I)$ and decoding all tokens in one
forward pass. Unlike flow-map compression, \nameshort does not learn an endpoint
map along a discrete or simplex-valued trajectory; it first constructs a
sampleable Gaussian coupling and then learns the one-step inverse map.
Appendix~\ref{app:trilemma} formalizes the expressivity barrier for direct
token-wise one-step decoders.

\xhdr{Stage A: Discrete sequence $\to$ Gaussian}
Stage~A constructs the latent--sequence coupling used by the one-step decoder.
Since the sequence $x$ is discrete, we do not fit a continuous density model
directly on tokens. Instead, we first encode $x$ into a continuous representation
$u$, and then transform this representation into a Gaussian latent $z$ using a
normalizing flow. This separates two roles: the encoder space should preserve
the information needed to reconstruct the sequence, while the flow space should
match the simple prior used for sampling.

Concretely, we parameterize a reparameterized encoder
$q_\psi(u\mid x)$ by
$
u = E_\psi(x,\epsilon), \epsilon\sim\mathcal{N}(0,I),
$
and map the resulting representation to the sampling latent through an
invertible normalizing flow,
$z=\mathrm{NF}_\phi(u),  p_Z=\mathcal{N}(0,I).$
A lightweight decoder $D$ reconstructs the original sequence from $u$. The
Stage~A objective is
\begin{equation}
\begin{aligned}
\mathcal{L}_{A}
&=
\lambda_{\mathrm{rec}}
\underbrace{
\left[-\sum_{t=1}^{T}\log p_D(x_t\mid u)\right]
}_{\mathcal{L}_{\mathrm{rec}}}
+
\lambda_{\mathrm{KL}}
\underbrace{
\mathrm{KL}\!\left(q_\psi(u\mid x)\,\|\,\mathcal{N}(0,I)\right)
}_{\mathcal{L}_{\mathrm{KL}}} \\
&\quad+
\lambda_{\mathrm{flow}}
\underbrace{
\left[
-\log p_Z(\mathrm{NF}_\phi(u))
-\log\left|\det
\frac{\partial \mathrm{NF}_\phi(u)}{\partial u}
\right|
\right]
}_{\mathcal{L}_{\mathrm{flow}}}.
\end{aligned}
\label{eq:stage_a}
\end{equation}

Each term enforces a distinct property of the coupling. The reconstruction loss
prevents the continuous representation from discarding sequence-level
information. The KL term regularizes the encoder distribution and stabilizes the
continuous representation space. The flow likelihood then aligns the induced
latent marginal with the Gaussian prior used at inference. Thus, Stage~A is not
used as an exact likelihood model for the discrete sequence itself. Its purpose
is to produce paired supervision
\[
(z,x), \qquad z=\mathrm{NF}_\phi(E_\psi(x,\epsilon)),
\]
where $z$ is both informative about $x$ and approximately distributed as the
sampling prior. These pairs are the training data for Stage~B.

\xhdr{Stage B: Gaussian $\to$ Discrete sequence}
After Stage~A, we freeze $E_\psi$, $D$, and $\mathrm{NF}_\phi$ and construct
paired data $\mathcal{D}_Z=\{(z^{(i)},x^{(i)})\}_{i=1}^{N}$ with
$z^{(i)}=\mathrm{NF}_\phi(E_\psi(x^{(i)},\epsilon^{(i)}))$. We train a
parallel decoder $G_\theta$ on these pairs:
\begin{equation}
\begin{aligned}
\ell &= G_\theta(z)\in\mathbb{R}^{T\times V}, \qquad
G_\theta(x\mid z)=\prod_{t=1}^{T}\mathrm{softmax}(\ell_t)[x_t], \\
\mathcal{L}_{B}(\theta)
&= -\mathbb{E}_{(z,x)\sim\mathcal{D}_Z}\log G_\theta(x\mid z).
\end{aligned}
\label{eq:stage_b}
\end{equation}
Although the categorical emission factorizes across positions given $z$, the
generated marginal does not reduce to an unconditional product distribution:
$p_\theta^{\mathrm{gen}}(x)=\int G_\theta(x\mid z)p_Z(z)\,dz$ is a latent
mixture of parallel decoders. The global dependence is therefore carried by the
coupling-induced latent variable, while the decoder performs the final
conditional token emission.
At inference time, we do not run Stage~A: we draw $z\sim\mathcal{N}(0,I)$, compute
$\ell=G_\theta(z)$, and sample
$\hat{x}_t\sim\mathrm{Cat}(\mathrm{softmax}(\ell_t/\tau))$ independently across
positions. Training proceeds in two passes. We first optimize Eq.~\eqref{eq:stage_a}
on the training set, freeze the encoder and flow, materialize $\mathcal{D}_Z$,
and then optimize Eq.~\eqref{eq:stage_b}. The full training and sampling
procedure is given in Appendix~\ref{app:algorithm}.

\xhdr{Why latent matching matters for one-step decoding}
A VAE-style encoder already produces latent--sequence pairs, so one may ask why the flow component is necessary. The issue is that Stage~B is trained on the latent marginal induced by Stage~A, but generation samples from the Gaussian prior. If these two latent distributions differ, the one-step decoder is used off distribution at inference. The flow therefore plays a specific role: it aligns the Stage~B training latents with the sampling prior.
This train--test alignment is the main reason for fitting a flow after the
continuous encoder: the decoder is trained on the same latent marginal from
which it will be sampled at inference.

\begin{propositionbox}{Latent matching controls one-step generation error}{latent_matching}
Let $q_\phi(z,x)$ be the latent--sequence joint induced by Stage~A, and let
$p_Z=\mathcal{N}(0,I)$ be the inference prior. For any decoder $G_\theta(x\mid z)$,
the generated distribution
$p_\theta^{\mathrm{gen}}(x)=\int G_\theta(x\mid z)p_Z(z)\,dz$ satisfies
\begin{equation}
\mathrm{TV}\!\left(
p_{\mathrm{data}},
p_\theta^{\mathrm{gen}}
\right)
\le
\mathbb{E}_{z\sim q_\phi(z)}
\mathrm{TV}\!\left(q_\phi(\cdot\mid z),G_\theta(\cdot\mid z)\right)
+
\mathrm{TV}\!\left(q_\phi(z),p_Z(z)\right)
\label{eq:latent_matching_tv}
\end{equation}
Moreover, if
\[
\mathbb{E}_{z\sim q_\phi(z)}
\mathrm{KL}\!\left(q_\phi(\cdot\mid z)\,\|\,G_\theta(\cdot\mid z)\right)
\le \varepsilon_{\mathrm{dec}},
\qquad
\mathrm{KL}\!\left(q_\phi(z)\,\|\,p_Z(z)\right)
\le \varepsilon_{\mathrm{flow}},
\]
then
\begin{equation}
\mathrm{TV}\!\left(
p_{\mathrm{data}},
p_\theta^{\mathrm{gen}}
\right)
\le
\sqrt{\frac{\varepsilon_{\mathrm{dec}}}{2}}
+\sqrt{\frac{\varepsilon_{\mathrm{flow}}}{2}} .
\label{eq:latent_matching_kl_tv}
\end{equation}
Consequently, if Stage~B matches $q_\phi(x\mid z)$ and Stage~A matches $q_\phi(z)$ to $p_Z$, then one-step sampling recovers the data distribution.
\end{propositionbox}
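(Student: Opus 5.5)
The plan is to introduce an intermediate distribution and apply the triangle inequality for total variation. First, we make explicit that the Stage~A joint has the correct $x$-marginal: $q_\phi(z,x)=p_{\mathrm{data}}(x)\,q_\phi(z\mid x)$, where $q_\phi(z\mid x)$ is the pushforward of $\epsilon\mapsto \mathrm{NF}_\phi(E_\psi(x,\epsilon))$. Hence $\sum_z$-marginalizing (integrating over $z$) gives back $p_{\mathrm{data}}(x)=\int q_\phi(x\mid z)\,q_\phi(z)\,dz$. We then define the hybrid distribution
\[
\tilde p(x)=\int G_\theta(x\mid z)\,q_\phi(z)\,dz,
\]
which uses the trained decoder but feeds it latents from the Stage~A marginal rather than from the prior. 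The triangle inequality then gives $\mathrm{TV}(p_{\mathrm{data}},p^{\mathrm{gen}}_\theta)\le \mathrm{TV}(p_{\mathrm{data}},\tilde p)+\mathrm{TV}(\tilde p,p^{\mathrm{gen}}_\theta)$.

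For the first term, we write $p_{\mathrm{data}}(x)-\tilde p(x)=\int \bigl(q_\phi(x\mid z)-G_\theta(x\mid z)\bigr)q_\phi(z)\,dz$. We then take absolute values, move them inside the integral, and apply Tonelli to swap the sum over $x$ with the integral over $z$. This yields $\mathrm{TV}(p_{\mathrm{data}},\tilde p)\le \mathbb{E}_{z\sim q_\phi(z)}\mathrm{TV}(q_\phi(\cdot\mid z),G_\theta(\cdot\mid z))$, which is joint convexity of TV under mixing.

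For the second term, $\tilde p$ and $p^{\mathrm{gen}}_\theta$ are images of $q_\phi(z)$ and $p_Z$ under the same Markov kernel $G_\theta(\cdot\mid z)$, so the data-processing inequality for TV gives $\mathrm{TV}(\tilde p,p^{\mathrm{gen}}_\theta)\le \mathrm{TV}(q_\phi(z),p_Z)$. A direct proof is also short: write $\tilde p(x)-p^{\mathrm{gen}}_\theta(x)=\int G_\theta(x\mid z)\,(q_\phi(z)-p_Z(z))\,dz$. Summing over $x$ with absolute values and using $\sum_x G_\theta(x\mid z)=1$ gives $\tfrac12\int|q_\phi(z)-p_Z(z)|\,dz$. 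This proves \eqref{eq:latent_matching_tv}.

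For \eqref{eq:latent_matching_kl_tv}, we apply Pinsker's inequality $\mathrm{TV}\le\sqrt{\mathrm{KL}/2}$ to each term. The flow term is immediate. For the decoder term, we use Pinsker pointwise in $z$ and then Jensen's inequality for the concave square root:
\[
\mathbb{E}_z\sqrt{\mathrm{KL}_z/2}\le\sqrt{\mathbb{E}_z\mathrm{KL}_z/2}\le\sqrt{\varepsilon_{\mathrm{dec}}/2}.
\]
The final claim follows by setting both $\varepsilon$'s to zero.

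There is no real obstacle here. The only care needed is measure-theoretic: we must confirm that the $x$-marginal of $q_\phi(z,x)$ is exactly $p_{\mathrm{data}}$, which holds because Stage~A only attaches noise to real data. We must also note that conditionals $q_\phi(\cdot\mid z)$ exist, and they do since $x$ is discrete and finite.
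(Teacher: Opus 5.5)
Your proposal is correct and matches the paper's proof step for step. Both use the same intermediate distribution $\int G_\theta(x\mid z)\,q_\phi(z)\,dz$, the triangle inequality, convexity for the decoder term, data processing under the Markov kernel $G_\theta$ for the latent term, and Pinsker plus Jensen for the KL bound; your explicit check of the data-processing step and of the $x$-marginal of the Stage~A joint are useful additions that do not change the argument.
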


Proposition~\ref{prop:latent_matching} identifies the two quantities that the
two-stage procedure must control. The first term is the decoding error optimized
by Eq.~\eqref{eq:stage_b}; the second is the latent mismatch optimized by the
Stage~A flow objective. Thus, the flow is not introduced merely as a richer VAE
prior. It makes the latent--sequence coupling sampleable by aligning the latents
used for supervised decoder training with the Gaussian latents used at inference.
Appendix~\ref{app:latent_matching_proof} gives the full formal statement,
including the KL-based bound and proof.

\subsection{Few-step extension}
\label{sec:few_step_lcmdm}

The coupling construction is not restricted to a direct one-step decoder. Since
Stage~A produces paired latent--sequence supervision
$\mathcal{D}_Z=\{(z^{(i)},x^{(i)})\}_{i=1}^{N}$, Stage~B can be instantiated by
any conditional discrete generator. We therefore also consider a few-step
variant that replaces the one-step decoder $G_\theta$ with a latent-conditioned
masked denoiser $H_\theta$. This gives a direct extension from one-step decoding
to masked diffusion sampling while preserving the same learned Gaussian latent
space.

During training, $H_\theta$ receives a partially masked sequence together
with its Stage~A latent and predicts the clean tokens at the masked positions.
The latent therefore supplies global sequence-level information, while the
masked state provides the local refinement context that a direct one-step
decoder lacks.
For each pair $(z,x)\sim\mathcal{D}_Z$, we sample a masking time
$t\sim\mathrm{Unif}(0,1)$ and construct a partially masked sequence $x_t$ by
masking each position independently with probability $t$. Specifically, we
sample $m_i\sim\mathrm{Bernoulli}(t)$ for each position $i$, set
$(x_t)_i=\mathsf{M}$ if $m_i=1$ and $(x_t)_i=x_i$ otherwise, where
$\mathsf{M}$ is the mask token. The denoiser $H_\theta(x_t,z)$ is conditioned
only on the masked sequence $x_t$ and the Gaussian latent $z$, and predicts
logits over the vocabulary at all positions. We train it with the masked-token
cross-entropy objective
\begin{equation}
  \mathcal{L}_{\mathrm{MDM}}(\theta)
  =
  -\mathbb{E}_{(z,x)\sim\mathcal{D}_Z,\;t\sim\mathrm{Unif}(0,1),\;m}
  \left[
  \frac{1}{\sum_{i=1}^{T}m_i}
  \sum_{i:m_i=1}
  \log p_\theta(x_i\mid x_t,z)
  \right].
  \label{eq:few_step_lcmdm_loss}
\end{equation}
The training procedure is summarized in Algorithm~\ref{alg:lcmdm_training}.
At inference time, we sample one latent $z\sim\mathcal{N}(0,I)$,
keep it fixed throughout the trajectory, and run P2-self masked sampling
\citep{peng2025pathplanning}. With one reveal step, this reduces to a
latent-conditioned parallel decode; with more steps, the sampler can revise
low-confidence positions while conditioning on the same global latent. The full
masked-token objective and sampling algorithm are given in
Appendix~\ref{app:few_step_lcmdm}.

\subsection{Guided one-step generation}
\label{sec:guidance}

Guidance is a central tool in modern generative modeling, as it enables
controllable generation toward a target condition or property
\citep{dhariwal2021diffusion,ho2022classifierfree}. In continuous models, such
guidance can often be applied directly through the model output. In discrete
generation, however, the situation is more difficult because sampling is
non-differentiable. This difficulty becomes especially pronounced in current
multi-step discrete generators, where guidance or reward signals must be
propagated through an entire trajectory of stochastic transitions
\citep{sahoo2024mdlm,schiff2025simpleguidance,nisonoff2025unlocking,wang2025drakes}.
One-step discrete generators avoid this issue by acting on the final predictive
distribution, rather than being propagated through multiple discrete updates.
This shorter control path is empirically effective:
Section~\ref{sec:exp_guidance} shows that one-step guidance improves the
quality--efficiency tradeoff relative to the multi-step MDM baseline, and
Figure~\ref{fig:guidance_summary} summarizes both the guided comparison and the
CFG sweep. We use three standard mechanisms in this one-step setting.

\looseness=-1

\paragraph{Classifier-free guidance.}
Following classifier-free guidance~\citep{ho2022classifierfree}, we train
$G_\theta(z,y)$ with conditioning dropout and combine conditional and
unconditional logits as
\begin{equation}
\ell^{\mathrm{cfg}}=\ell^{u}+s(\ell^{c}-\ell^{u}),
\qquad
\ell^{c}=G_\theta(z,y),
\qquad
\ell^{u}=G_\theta(z,\varnothing),
\label{eq:cfg_guidance}
\end{equation}
which biases the final token distributions toward the requested condition
without changing the latent sample.
\paragraph{Classifier guidance.}
Following standard classifier-guidance methods
\citep{dhariwal2021diffusion,schiff2025simpleguidance,nisonoff2025unlocking},
we keep $G_\theta$ fixed and optimize $z$ by gradient ascent on a differentiable
relaxation of the decoded sequence before the single final sample:
\begin{equation}
z^{k+1}=z^k+\eta\nabla_{z^k}
R\!\left(\rho(G_\theta(z^k,y)),y\right),
\label{eq:latent_classifier_guidance}
\end{equation}
which changes the latent draw rather than the decoder parameters.
\paragraph{Reward fine-tuning.}
For reward fine-tuning~\citep{wang2025drakes}, we update the generator with a reward term
and an anchor penalty to keep it close to the pretrained model:
\begin{equation}
\mathcal{L}_{\mathrm{FT}}(\theta)
=
-\lambda_{\mathrm{reward}}
\mathbb{E}_{z,y}
R\!\left(\rho(G_\theta(z,y)),y\right)
+\lambda_{\mathrm{anchor}}\mathcal{R}(\theta,\theta_0).
\label{eq:reward_finetuning}
\end{equation}
These mechanisms act at different places in the one-step pipeline: logits for
CFG, the latent variable for classifier guidance, and model parameters for
reward fine-tuning. In all cases, the relaxation is needed only at the final
prediction layer, rather than through a denoising trajectory.

\paragraph{Differentiable guidance through final-layer relaxations.}
Classifier guidance and reward fine-tuning require gradients through the generated
sequence, but discrete sampling is not differentiable. We use two simple surrogate paths.
The first replaces the sampled sequence by a soft relaxation of the final logits,
using a softmax relaxation for categorical outputs to give stable low-variance gradients. The second uses a
Gumbel-Softmax or straight-through relaxation, which more closely follows the
categorical sampling path at the cost of noisier gradients. In both cases, the
relaxation is used only to compute guidance or fine-tuning gradients; the final
sample is still drawn discretely from the guided logits. Full procedural details
are given in Appendix~\ref{app:guidance_relaxations}.

\section{Experiments}

\begin{table}[t]
\centering
\caption{
\textbf{Unconditional generation on MNIST-Binary.} We report FID at a specified sampling budget (Steps).
}
\label{tab:mnist_binary_main_1col}

\footnotesize
\renewcommand{\arraystretch}{1.06}
\setlength{\tabcolsep}{2.3pt}

\sisetup{
  detect-weight=true,
  detect-inline-weight=math,
  table-number-alignment=center,
  table-text-alignment=center
}

\newcommand{\faded}{\color{black!55}}

\newcolumntype{L}[1]{>{\raggedright\arraybackslash}p{#1}}

\begin{minipage}[t]{0.37\columnwidth}
\vspace{0pt}\centering
\begin{tabular}{@{}L{0.67\linewidth} c S[table-format=3.2]@{}}
\toprule
\multicolumn{3}{@{}l}{\textbf{One-step}} \\
\midrule
Method & Steps & {FID$\downarrow$} \\
\midrule
\MDLM & 1 & 204.64 \\
\UDLM & 1 & 130.57 \\
\PAIRFLOW & 1 & 40.58 \\
\UDLM{} + \DCD & 1 & 53.84 \\
\PAIRFLOW{} + \DCD & 1 & 19.51 \\
\UDLM{} + \ReDi & 1 & 18.44 \\
\PAIRFLOW{} + \ReDi & 1 & 12.90 \\
\CFM & 1 & 10.10 \\
\midrule
\rowcolor{black!6}
\nameshort (Ours) & 1 & \textbf{5.50} \\
\bottomrule
\end{tabular}
\end{minipage}
\hfill
\begin{minipage}[t]{0.275\columnwidth}
\vspace{0pt}\centering
\faded
\begin{tabular}{@{}L{0.56\linewidth} c S[table-format=3.2]@{}}
\toprule
\multicolumn{3}{@{}l}{\textbf{Few-step}} \\
\midrule
Method & Steps & {FID$\downarrow$} \\
\midrule
\MDLM & 2 & 159.26 \\
\MDLM & 4 & 103.74 \\
\addlinespace[1pt]
\UDLM & 2 & 42.54 \\
\UDLM & 4 & 11.25 \\
\addlinespace[1pt]
\PAIRFLOW & 2 & 15.61 \\
\PAIRFLOW & 4 & 8.50 \\
\addlinespace[1pt]
\CFM & 2 & 8.70 \\
\CFM & 4 & 7.80 \\
\bottomrule
\end{tabular}
\end{minipage}
\hfill
\begin{minipage}[t]{0.275\columnwidth}
\vspace{0pt}\centering
\faded
\begin{tabular}{@{}L{0.56\linewidth} c S[table-format=3.2]@{}}
\toprule
\multicolumn{3}{@{}l}{\textbf{Multi-step}} \\
\midrule
Method & Steps & {FID$\downarrow$} \\
\midrule
\SFMOT & \multicolumn{1}{c}{300} & 4.62 \\
\DirichletFM & \multicolumn{1}{c}{--} & 77.35 \\
\DDSM & \multicolumn{1}{c}{100} & 7.79 \\
\DThreePM & \multicolumn{1}{c}{1000} & 67.36 \\
\DFM & \multicolumn{1}{c}{--} & 34.42 \\
\addlinespace[2pt]
\MDLM & 64 & 7.01 \\
\UDLM & 64 & 5.01 \\
\PAIRFLOW & 64 & 5.17 \\
\bottomrule
\end{tabular}
\end{minipage}

\vspace{-0.8ex}
\end{table}

\label{sec:experiments}
We evaluate whether \nameshort provides a practical route to one-step
discrete generation across three established benchmarks: MNIST-Binary
unconditional image generation, Fly Brain enhancer generation, and LM1B
unconditional text generation. These tasks span controlled binary images,
structured biological sequences, and open-ended language, allowing us to test
whether the same coupling-based construction remains useful across qualitatively
different discrete domains.
 The main paper keeps the benchmark setup concise but explicit, while
Appendix~\ref{app:exp_details} collects the full implementation details. In
particular, Appendix~\ref{app:exp_mnist}, Appendix~\ref{app:exp_enhancer}, and
Appendix~\ref{app:exp_lm1b} provide the complete training and evaluation
protocols for the three tasks discussed below.
\subsection{\namelong improves one-step generation across domains}
\label{sec:one_step_results}

We first evaluate the strict one-step setting, where generation consists of one
Gaussian draw followed by one parallel decode. Tables~\ref{tab:mnist_binary_main_1col}
and~\ref{tab:dna_enhancer_clean} report the image and DNA results, while
Figure~\ref{fig:lm1b-entropy-genppl-tradeoff} reports the LM1B
entropy--perplexity tradeoff. Across all three settings, \nameshort
improves the available one-step baselines under the corresponding evaluation
protocol.

\xhdr{MNIST-Binary setup}
MNIST-Binary is a controlled discrete image-generation benchmark in which each
binarized $28\times28$ image is treated as a discrete grid. We follow the
unconditional-generation protocol used in \CFM{} and report FID as the primary sample-quality metric. Our model uses the
two-stage \nameshort pipeline: Stage~A learns a latent coupling on a $7\times7$
latent grid with 16 channels, and Stage~B trains a one-step binary decoder. The
full benchmark configuration is reported in Appendix~\ref{app:exp_mnist}.

Table~\ref{tab:mnist_binary_main_1col} shows the strongest result for the
\nameshort model. With one decoding step, it achieves FID $5.50$, improving over
all one-step baselines in the comparison, including \CFM at $10.10$ and
\PAIRFLOW{}+\ReDi at $12.90$. The result is also competitive with several
few-step and multi-step baselines, while preserving true one-step sampling.
Figure~\ref{fig:uncond_mnist_results} shows random samples from the trained
generator.

\begin{wraptable}[16]{r}{0.5\linewidth}
\vspace{-1.1\baselineskip}
\centering
\caption{\textbf{Fly Brain enhancer generation.}}
\label{tab:dna_enhancer_clean}
\scriptsize
\setlength{\tabcolsep}{2.6pt}
\renewcommand{\arraystretch}{1.04}

\begin{tabular}{@{}L{0.61\linewidth}cc@{}}
\toprule
Model & Steps & FB FBD $\downarrow$ \\
\midrule

\multicolumn{3}{@{}l}{\textbf{Reference}}\\
\midrule
Random Sequence & -- & 876 \\
\midrule
\multicolumn{3}{@{}l}{\textbf{Multi-step}}\\
\midrule
Linear FM~\citep{stark2024dirichletfm} & 100 & 15.0 \\
Dirichlet FM~\citep{stark2024dirichletfm} & 100 & 15.2 \\
\SLM & 100 & 4.4 $\pm$ 0.2 \\
\FisherFlow & 100 & 3.8 $\pm$ 0.3 \\
Autoregressive~\citep{stark2024dirichletfm} & 500 & 25.2 \\
\midrule
\multicolumn{3}{@{}l}{\textbf{One-step}}\\
\midrule
Dirichlet FM (dist.)~\citep{stark2024dirichletfm} & 1 & 15.8 \\
\rowcolor{black!6}
\textbf{\nameshort (Ours)} & \textbf{1} & \textbf{12.9 $\pm$ 0.3} \\
\bottomrule
\end{tabular}
\vspace{-0.9\baselineskip}
\end{wraptable}
\textbf{Fly Brain enhancer setup.}
For DNA enhancer generation, we follow the Fly Brain benchmark from \DFM{}. Sequences have length 500 over $\{A,C,G,T\}$, and the benchmark
contains 81 target classes. We report FBD, the Fr\'echet distance in the frozen
classifier embedding space used by prior work. The corresponding training and
evaluation details are in Appendix~\ref{app:exp_enhancer}.

\looseness=-1
Table~\ref{tab:dna_enhancer_clean} shows that \nameshort also transfers
to structured biological sequences. It reduces Fly Brain FBD from $15.8$ for the
distilled one-step Dirichlet FM baseline to $12.9\pm0.3$. Stronger 100-step
simplex diffusion and flow methods, especially \SLM and \FisherFlow, still
perform better. We therefore view this result as evidence that \nameshort is meaningful beyond binary images, while leaving a clear gap to stronger
iterative generators in harder biological sequence settings.

\looseness=-1
\xhdr{LM1B setup}
For LM1B, we use Figure~\ref{fig:lm1b-entropy-genppl-tradeoff} as the main
one-step result, and Table~\ref{tab:lm1b-entropy-genppl-raw} lists the raw
numbers behind that tradeoff. We follow the FLM evaluation protocol and report
generative perplexity under a GPT-2 Large scorer together with unigram entropy
on 1{,}024 generated samples. Gen-PPL measures sample quality under an external
language model, while entropy separates diverse generation from low-entropy
degenerate samples. \nameshort uses a frozen Qwen2.5-0.5B encoder in
Stage~A and a one-step Transformer decoder in Stage~B; the full text-generation
setup is given in~\S\ref{app:exp_lm1b}.

\looseness=-1
At the lowest temperature in the sweep, \nameshort reaches
Gen-PPL~45.82 with entropy~3.6088, which is slightly better in perplexity than
the LM1B reference point at Gen-PPL~47.47, though with lower entropy than the
reference entropy of 4.31. As the temperature increases to 0.60 and 0.65, entropy
rises to 4.0841 and 4.3192, while Gen-PPL remains at 76.19 and 98.96,
respectively. Compared with one-step baselines such as \FMLM{} at 119.34 Gen-PPL
and 4.16 entropy, or \DFM{} (ESD) at 68.11 Gen-PPL and 3.79 entropy, the sweep
shows a cleaner tradeoff between quality and diversity than the prior one-step
points. This result is weaker than the MNIST-Binary case in absolute quality,
but it shows that the same \nameshort construction remains credible for
open-ended text generation.

\subsection{Few-step refinement}
\label{sec:lm1b_step_efficiency}

\begin{wrapfigure}[15]{r}{0.5\textwidth}
    \centering
    \includegraphics[width=0.95\linewidth]{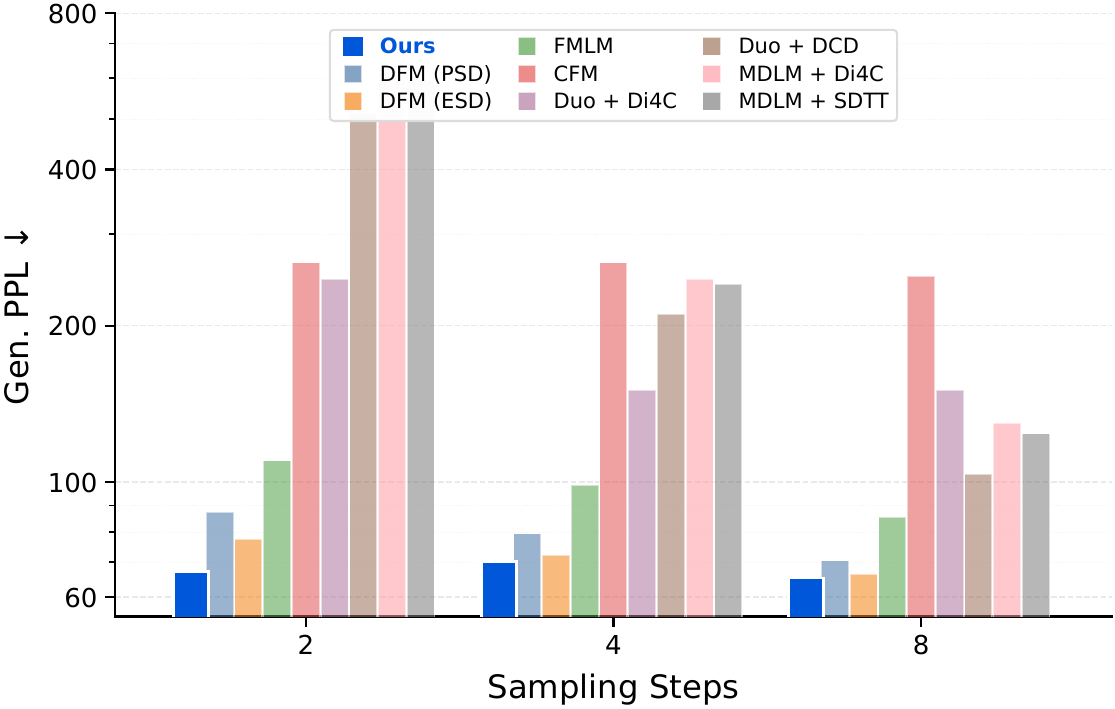}
    \caption{
\textbf{\nameshort remains competitive for few-step LM1B generation.}
}
    \label{fig:genppl_vs_sampling_steps}
\end{wrapfigure}

The one-step results test whether \nameshort can support direct
parallel decoding. We next ask whether the same learned Gaussian latent space
remains useful when the generator is allowed a small number of masked-refinement
steps.
Figure~\ref{fig:genppl_vs_sampling_steps} compares LM1B Gen-PPL at 2, 4, and
8 sampling steps under the same evaluation protocol. \nameshort draws
one Gaussian latent $z\sim\mathcal{N}(0,I)$ and keeps it fixed throughout the
masked-refinement trajectory. The additional steps refine
local token decisions, while global sequence-level information is supplied
by the continuous Gaussian latent.
\nameshort achieves the best Gen-PPL among the compared low-step
methods at all three budgets. At 2 steps, it reaches Gen-PPL $66.2$, improving
over the strongest baseline DFM-ESD at $76.6$ and DFM-PSD at $86.5$. At 4 steps,
it obtains $69.3$, still slightly better than DFM-ESD at $71.2$ and substantially
better than DFM-PSD at $78.5$ and FMLM at $97.6$. At 8 steps, it reaches its best
score, $64.3$, improving over DFM-ESD at $65.4$, DFM-PSD at $69.5$, and FMLM at
$84.4$. Other low-step acceleration methods, including CFM, Duo-based
distillation, and MDLM-based distillation, remain substantially worse in this
low-step regime.

\subsection{Efficient one-step guidance}
\label{sec:exp_guidance}

Finally, we test the guidance claim suggested by the method. In multi-step
discrete generators, guidance must influence a long trajectory of
non-differentiable stochastic token updates. The one-step model instead acts on
the final logits, avoiding trajectory-level credit assignment. We compare
against a larger $32\times32$ U-Net masked diffusion model trained on the same
MNIST-Binary task with the same data budget. All evaluations use balanced digit
labels, 1{,}000 generated samples, the same reward model, and the same held-out
evaluation classifier. Guidance algorithms and hyperparameters are given in
Appendix~\ref{app:guidance}.

Figure~\ref{fig:guidance_summary} shows that one-step guidance is more
efficient across all three mechanisms while matching or improving FID. With
classifier-free guidance, \nameshort obtains FID $5.64$ and $99.1\%$ class
accuracy using $2$ function evaluations, compared with FID $8.72$ and $99.8\%$
accuracy for MDM using $512$. With classifier guidance, \nameshort reaches FID
$5.26$ and $98.5\%$ accuracy using $6$ evaluations, while MDM obtains FID
$7.95$ and $97.2\%$ accuracy using $256$. Reward fine-tuning gives the strongest
\nameshort result: FID $5.09$, $99.5\%$ accuracy, and $0.63$ seconds per
1{,}000 samples, compared with FID $8.82$, $99.7\%$, and $203.04$ seconds for
reward-fine-tuned MDM.

\begin{figure}[!t]
\centering
\vspace{-0.5\baselineskip}
\setlength{\tabcolsep}{0pt}
\begin{tabular}{@{}c@{\hspace{0.03\linewidth}}c@{}}
\begin{minipage}[t]{0.64\linewidth}
\centering
\vspace{0pt}
\scriptsize
\setlength{\tabcolsep}{2.1pt}
\renewcommand{\arraystretch}{1.0}
\resizebox{\linewidth}{!}{%
\begin{tabular}{@{}llccccc@{}}
\toprule
Model & Guidance & Params (M) & FID $\downarrow$ & Acc. $\uparrow$ & NFE $\downarrow$ & Sec/1k $\downarrow$ \\
\midrule
MDM  & CFG        & 73.74 & 8.72 & \textbf{99.8} & 512 & 356.82 \\
MDM  & Classifier & 73.74 & 7.95 & 97.2          & 256 & 244.68 \\
MDM  & Reward FT  & 73.74 & 8.82 & 99.7          & 256 & 203.04 \\
\midrule
Ours & CFG        & \textbf{24.07} & 5.64          & 99.1 & 2          & 1.72 \\
Ours & Classifier & \textbf{24.07} & 5.26          & 98.5 & 6          & 13.20 \\
Ours & Reward FT  & \textbf{24.07} & \textbf{5.09} & 99.5 & \textbf{1} & \textbf{0.63} \\
\bottomrule
\end{tabular}
}
\end{minipage}
&
\begin{minipage}[t]{0.29\linewidth}
\centering
\vspace{0pt}
\includegraphics[width=0.95\linewidth]{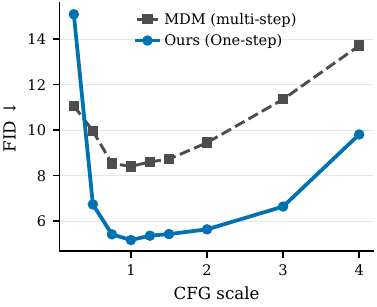}
\end{minipage}
\end{tabular}
\vspace{0.2\baselineskip}
\caption{
\textbf{Guidance is efficient and effective in one-step generation.}
\textbf{Left}: \nameshort achieves lower FID than the MDM baseline for CFG,
classifier guidance, and reward fine-tuning while using far fewer function
evaluations. \textbf{Right}: \nameshort shows better quality--guidance
tradeoff across scales.
}
\label{fig:guidance_summary}
\end{figure}

\section{Conclusion}
We presented \namelongs as a representation-based route to fast discrete
generation. Rather than compressing an existing denoising or flow trajectory,
the method learns a sampleable coupling between discrete sequences and Gaussian
latents, then trains a parallel decoder on the induced latent--sequence pairs.
At inference time, generation is one-step: sample $z\sim\mathcal{N}(0,I)$ and
decode once. Empirically, \nameshort gives a consistent low-step advantage
across MNIST-Binary, DNA enhancer design, and LM1B. The same latent interface
also supports direct guidance and few-step masked refinement, suggesting that
one-step discrete generation need not be viewed only as the limit of trajectory
compression.

\xhdr{Limitations}
The main limitation is scale. Although \nameshort is evaluated across images,
DNA, and language generation, the experiments use moderate-sized models and do
not test frontier-scale language generation, long-context sequence modeling, or
large-scale protein design. The results should therefore be read as evidence for
a one-step modeling route rather than a final scaling study. Finally, the metrics for language generation, like generative perplexity, are often not well correlated with final sample quality, a phenomenon that is also commonly observed in image generative models with likelihood-based evaluation~\citep{theis2015note,nalisnick2018deep}.

\begin{ack}
We thank Zachary Bezemek for helpful discussions and Antonio Franca for assistance with the LM1B experiments. F.Z.P. and A.R.Z. are partially supported by NIH R01HL169347.
\end{ack}

\clearpage
\bibliographystyle{plainnat}
\bibliography{reference}

\newpage
\appendix
\startappendixcontents
\appendixcontents

\section{Extended Related Work}
\label{app:related_work}

This appendix expands the concise discussion in Section~\ref{sec:related_work}. The main paper groups prior work by inference regime: serial autoregressive generation, iterative diffusion and flow-based generation, and recent attempts to compress discrete generation into one or a few evaluations. Here we provide a more detailed account of those families and clarify how the proposed model construction differs from trajectory-compression and flow-map methods.

\subsection{Autoregressive sequence generation}
\label{app:rw_ar}

Autoregressive models represent a discrete sequence distribution through a product of conditional next-token distributions. This factorization has become the standard foundation for modern Transformer-based language models~\citep{vaswani2017attention,brown2020gpt3,touvron2023llama2,yang2024qwen2}. Its main advantage is expressivity: each token is generated while conditioning on all previously generated tokens, allowing the model to represent rich long-range dependencies through a sequence of local decisions. Its main limitation is inference structure. Generation proceeds token by token, so the number of model evaluations scales with sequence length. Non-autoregressive and insertion-based sequence models replace strict left-to-right decoding with parallel prediction, iterative refinement, or flexible insertion orders~\citep{gu2018nat,lee2018iterative,ghazvininejad2019maskpredict,stern2019insertion}. These models are an important precursor to the speed-quality tradeoff studied here, but they still rely on refinement or specialized decoding structure rather than a single latent-to-sequence map.

\subsection{Iterative discrete diffusion, masked refinement, and discrete flows}
\label{app:rw_iterative}

Discrete diffusion and masked-refinement models provide a more parallel alternative to autoregressive decoding. Instead of generating tokens sequentially, these models update many positions at once and recover global structure through repeated denoising, masking, or replacement steps~\citep{austin2021d3pm,hoogeboom2021argmax,hoogeboom2022ardm,chen2022analogbits,sahoo2024mdlm,lou2024sedd,zheng2024reparameterized,schiff2025discreteguidance}. Text diffusion models have developed both continuous-token and discrete-token variants, including embedding-space, simplex-based, latent, and non-autoregressive formulations~\citep{li2022diffusionlm,gong2023diffuseq,han2023ssdlm,lovelace2023latent,gong2023diffuseqv2,zhou2024diffusionnat,nie2025scalingmdm,gong2025diffullama,peng2025papl}. This improves parallelism across token positions, but the model must still be evaluated many times at inference. Discrete flow and simplex-based methods pursue related goals through transport or flow-matching objectives over categorical variables or their continuous relaxations~\citep{gat2024discrete,cheng2024sfm,stark2024dirichletfm,davis2024fisherflow}. Related masked-token generators in vision and vector-quantized image models also use iterative parallel decoding over discrete codebooks~\citep{gu2022vqdiffusion,chang2022maskgit,chang2023muse,hu2024mask}. In biological sequence modeling, diffusion language models have been adapted to protein generation and multimodal protein representations~\citep{wang2024dplm,wang2025dplm2}. These methods show that strong sample quality can be obtained when multiple denoising steps are allowed. They also sharpen the central question of this paper: whether one-step generation can recover useful global structure without relying on a long refinement trajectory.

\subsection{One-step and few-step acceleration}
\label{app:rw_onestep}

A growing line of work asks whether the inference cost of iterative discrete generators can be compressed into the one-step or few-step regime. Distillation and consistency methods train a compressed sampler or student model to imitate a stronger multi-step process, reducing the number of model evaluations required at test time~\citep{sahoo2025diffusionduality,deschenaux2025beyond,zhu2025dimo,chen2025dlmone}. Rectification methods improve the low-step trajectory itself by reducing the error induced when an iterative process is aggressively shortened~\citep{yoo2025redi,monsefi2025fsdfm,zheng2026didi,zhang2026t3d,bartosh2026fldd}. Coupling-based methods provide another route: \PAIRFLOW{} constructs source-target couplings for few-step discrete flow models and shows that the choice of coupling can strongly affect low-step performance~\citep{park2026pairflow}. These approaches demonstrate that fast discrete generation is feasible, but their central object is still a compressed, rectified, or otherwise improved generative trajectory.

\subsection{Flow-map methods}
\label{app:rw_flowmaps}

Flow-map methods are especially close to the setting studied in this paper. In continuous generative modeling, a flow map directly predicts a later or terminal state of a generative trajectory, thereby avoiding expensive numerical integration. Recent work adapts this idea to discrete data by operating over continuous relaxations, one-hot embeddings, or probability-simplex representations. \DiscFM{} formulates discrete flow maps with objectives aligned to simplex geometry~\citep{potaptchik2026discreteflowmaps}. \FlowMapLM{} learns continuous flows over one-hot token embeddings and distills them into flow maps for efficient language generation~\citep{lee2026flm}. \CFM{} defines categorical flow maps toward the simplex and trains them with self-distillation and endpoint-consistency objectives~\citep{roos2026categoricalflowmaps}. Although these methods differ in parameterization and loss design, they share a common premise: fast discrete generation can be obtained by learning better endpoint maps for continuous or simplex-valued trajectories.

\subsection{Position of our method}
\label{app:rw_position}

Our method targets the same practical regime as recent one-step and few-step generators, but it uses a different construction. We do not distill a multi-step teacher, rectify a discrete trajectory, or learn an endpoint map along a simplex-valued path. Instead, we first learn a coupling between discrete sequences and Gaussian latents. This coupling induces paired supervision between sampleable latents and target sequences. A parallel decoder is then trained as a supervised inverse map from Gaussian latents to discrete outputs. The distinction is therefore not only architectural but also supervisory: flow-map and distillation methods inherit supervision from a generative trajectory, whereas our decoder is trained on latent-sequence pairs created by a learned sequence--noise coupling.

\section{Training and sampling algorithm}
\label{app:algorithm}

\begin{algorithm}[H]
  \caption{Two-stage training and one-step sampling}
  \label{alg:transport_training}
  \small
  \begin{algorithmic}[1]
    \State \textbf{Input:} training set $\mathcal{D}=\{x^{(i)}\}_{i=1}^{N}$
    \For{Stage~A updates}
      \State sample a minibatch $x\sim\mathcal{D}$ and noise $\epsilon\sim\mathcal{N}(0,I)$
      \State $u \gets E_\psi(x,\epsilon)$
      \State $\hat{x}^{A} \gets D(u)$, \quad $z \gets \mathrm{NF}_\phi(u)$
      \State update $(\psi,\phi,D)$ by minimizing $\mathcal{L}_{A}$ in Eq.~\eqref{eq:stage_a}
    \EndFor
    \State freeze $E_\psi$, $D$, and $\mathrm{NF}_\phi$
    \State construct $\mathcal{D}_Z=\{(\mathrm{NF}_\phi(E_\psi(x^{(i)},\epsilon^{(i)})),x^{(i)})\}_{i=1}^{N}$
    \For{Stage~B updates}
      \State sample a minibatch $(z,x)\sim\mathcal{D}_Z$
      \State update $G_\theta$ by minimizing $\mathcal{L}_{B}$ in Eq.~\eqref{eq:stage_b}
    \EndFor
    \State \textbf{Sampling:} draw $z\sim\mathcal{N}(0,I)$, compute $\ell \gets G_\theta(z)$, and sample $\hat{x}_t \sim \mathrm{Cat}(\mathrm{softmax}(\ell_t/\tau))$
  \end{algorithmic}
\end{algorithm}

\section{Expressivity barrier of direct one-step token-wise decoders}
\label{app:trilemma}

This appendix expands the short motivation in Section~\ref{sec:method}. There, we argued that the difficulty of one-step discrete generation is not merely an optimization issue, but also a representational one: a direct one-step categorical decoder must recover global cross-token dependence in a single parallel prediction. Here we make that point precise, provide a complete proof of the expressivity barrier underlying Eq.~\eqref{eq:factorized_onestep_main}, give additional examples, and clarify how this perspective relates to existing model families and to our method.

\subsection{Why a direct one-step token-wise decoder is limited}
\label{app:trilemma_formal}

In the main text, the key observation was that a direct one-step discrete decoder typically predicts one categorical distribution per position and then samples all tokens independently, yielding a factorized sequence law
\begin{equation}
p(x)=\prod_{i=1}^{L} p_i(x_i).
\label{eq:factorized_onestep_main}
\end{equation}
We now formalize the family induced by this design.

Let $\mathcal{X} = [V]^L$ denote the discrete sequence space with vocabulary size $V$ and sequence length $L$, and let $\Delta(\mathcal{X})$ denote the simplex of all probability distributions over $\mathcal{X}$. Consider the family
\begin{equation}
\mathcal{F}_{\mathrm{fact}}
=
\left\{
q \in \Delta(\mathcal{X})
\;:\;
q(x)=\prod_{i=1}^{L} q_i(x_i), \;\; q_i \in \Delta([V])
\right\}.
\label{eq:factorized_family_appendix}
\end{equation}
This is exactly the distribution family induced by a direct one-step token-wise decoder with independent per-position sampling.

The main-text claim can now be stated precisely: the family in Eq.~\eqref{eq:factorized_family_appendix} is too small to represent a general joint law over sequences.

\begin{proposition}[Expressivity barrier of direct one-step token-wise decoders]
\label{prop:factorized_barrier_appendix}
For any $V,L \ge 2$, the factorized family $\mathcal{F}_{\mathrm{fact}}$ is a strict subset of $\Delta([V]^L)$. Equivalently, a direct one-step token-wise discrete decoder is not universally expressive over sequence distributions.
\end{proposition}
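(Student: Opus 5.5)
The plan is to exhibit, for arbitrary $V,L\ge 2$, one explicit distribution in $\Delta([V]^L)$ that cannot be written in the product form of Eq.~\eqref{eq:factorized_family_appendix}. Inclusion $\mathcal{F}_{\mathrm{fact}}\subseteq\Delta([V]^L)$ is immediate, since a product of probability vectors is nonnegative and sums to one over $[V]^L$. So everything rests on a witness of strictness.

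I would take the ``copy'' distribution supported on two constant sequences:
\[
p^\star(x)=\tfrac12\,\mathbf{1}[x=(1,\dots,1)]+\tfrac12\,\mathbf{1}[x=(2,\dots,2)],
\]
which uses $V\ge 2$. Suppose, for contradiction, that $p^\star(x)=\prod_{i=1}^L q_i(x_i)$ for some $q_i\in\Delta([V])$.

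The argument restricts attention to the first two coordinates, which is where $L\ge 2$ enters. For $L=2$ the four values $p^\star(1,1)=p^\star(2,2)=\tfrac12$ and $p^\star(1,2)=p^\star(2,1)=0$ are incompatible with a product. Indeed,
\[
q_1(1)q_2(1)\,q_1(2)q_2(2)=\tfrac14>0
\]
forces all four factors to be positive, yet $q_1(1)q_2(2)=0$ forces one of them to vanish.

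For general $L$ I would pass to marginals. Summing the product form over $x_3,\dots,x_L$ shows that the $(x_1,x_2)$-marginal of any member of $\mathcal{F}_{\mathrm{fact}}$ is itself a product $q_1(x_1)q_2(x_2)$. The $(x_1,x_2)$-marginal of $p^\star$ is exactly the $L=2$ copy law, so the contradiction above applies unchanged. An equivalent route, which avoids marginalizing, is to evaluate the product on full sequences and combine
\[
p^\star(1,1,\dots,1)\,p^\star(2,2,\dots,2)=p^\star(1,2,\dots,2)\,p^\star(2,1,\dots,1),
\]
an identity that every product law satisfies after regrouping factors; here the left side is $\tfrac14$ and the right side is $0$.

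There is no substantive obstacle. The only step needing care is stating the rank-one, or cross-ratio, identity cleanly so that the dependence on $V\ge2$ (two distinct symbols exist) and on $L\ge2$ (two distinct coordinates to swap) is explicit. I would close with the remark that, by the same identity, $\mathcal{F}_{\mathrm{fact}}$ lies in the proper algebraic subset where all such cross-ratio identities hold. Since $\Delta([V]^L)$ has nonempty relative interior, strictness is in fact generic rather than an artifact of this one example. This yields the ``equivalently'' clause: no direct one-step token-wise decoder can realize $p^\star$.
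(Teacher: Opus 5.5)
Your proof is correct, and it takes a different route from the paper's. The paper gives two arguments. The first is a degree-of-freedom count, $\dim\mathcal{F}_{\mathrm{fact}}=L(V-1)<V^L-1=\dim\Delta([V]^L)$, and this is the only part that covers general $V,L\ge 2$. The second is an explicit correlated counterexample, but only for $V=L=2$, using the same $2\times 2$ cross-ratio contradiction you use.

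You instead make the explicit witness work for all $V,L\ge 2$, via the two-constant-sequence copy law. You either marginalize to $(x_1,x_2)$, whose marginal reduces to the paper's $V=L=2$ example, or apply the cross-ratio identity directly to full sequences.

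Your approach gives a fully elementary, self-contained proof. The paper's dimension count tacitly needs a further unstated fact: that the polynomial image of an $L(V-1)$-dimensional product of simplices cannot fill a $(V^L-1)$-dimensional simplex, for instance via a Sard or Lipschitz-image measure-zero argument. Its small counterexample alone does not settle the general case.

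The paper's route supplies the quantitative linear-versus-exponential parameter gap that drives its interpretive discussion. Your closing remark, that $\mathcal{F}_{\mathrm{fact}}$ lies in the zero set of a polynomial that is nonzero on the simplex's affine hull, recovers the genericity conclusion: the complement is relatively open and of full measure. It does not give the explicit count.
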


\xhdr{Proof}
We prove the claim in two complementary ways.

First, consider the degrees of freedom. The full sequence distribution lives in the simplex $\Delta([V]^L)$, whose dimension is
\begin{equation}
\dim \Delta([V]^L)=V^L-1.
\label{eq:full_simplex_dim_appendix}
\end{equation}
By contrast, each factor $q_i \in \Delta([V])$ contributes only $V-1$ free parameters, so the factorized family in Eq.~\eqref{eq:factorized_family_appendix} has dimension
\begin{equation}
\dim \mathcal{F}_{\mathrm{fact}}=L(V-1).
\label{eq:factorized_dim_appendix}
\end{equation}
For all $V,L \ge 2$,
\begin{equation}
L(V-1) < V^L-1.
\label{eq:dim_gap_appendix}
\end{equation}
Therefore, $\mathcal{F}_{\mathrm{fact}}$ cannot coincide with the full sequence simplex. In other words, the family induced by direct one-step token-wise decoding is a strict low-dimensional subset of all discrete sequence distributions.

Second, it is useful to exhibit a concrete distribution that lies outside this family. Consider the binary length-two case $V=2$, $L=2$, and define
\begin{equation}
p^\star(0,0)=\tfrac12,\qquad
p^\star(1,1)=\tfrac12,\qquad
p^\star(0,1)=p^\star(1,0)=0.
\label{eq:correlated_example_appendix}
\end{equation}
This distribution places all probability mass on the two perfectly correlated outcomes. Suppose, for contradiction, that $p^\star \in \mathcal{F}_{\mathrm{fact}}$. Then there exist $q_1,q_2 \in \Delta([2])$ such that
\begin{equation}
p^\star(x_1,x_2)=q_1(x_1)q_2(x_2).
\end{equation}
Any such factorized distribution must satisfy
\begin{equation}
p^\star(0,0)\,p^\star(1,1)=p^\star(0,1)\,p^\star(1,0),
\label{eq:product_constraint_appendix}
\end{equation}
because both sides are equal to $q_1(0)q_1(1)q_2(0)q_2(1)$. But for the target distribution in Eq.~\eqref{eq:correlated_example_appendix},
\begin{equation}
p^\star(0,0)\,p^\star(1,1)=\tfrac14,
\qquad
p^\star(0,1)\,p^\star(1,0)=0,
\end{equation}
which is impossible. Hence $p^\star \notin \mathcal{F}_{\mathrm{fact}}$.

Together, the dimension argument and the explicit counterexample establish the proposition.
\hfill $\square$

\subsection{Interpretation of the degree-of-freedom gap}
\label{app:trilemma_interpretation}

The main text summarized the above result by noting that a general sequence distribution has $V^L-1$ degrees of freedom, whereas the factorized family in Eq.~\eqref{eq:factorized_onestep_main} has only $L(V-1)$. This gap is the mathematical form of the representational barrier.

The key point is that the problem is structural rather than pathological. The family in Eq.~\eqref{eq:factorized_family_appendix} excludes any target law with nontrivial cross-token dependency. This includes simple correlated patterns such as Eq.~\eqref{eq:correlated_example_appendix}, but also more realistic structures such as repeated motifs, long-range constraints, parity-type dependencies, and other globally coordinated sequence patterns. Thus, before considering optimization, architecture, or training objective, a direct one-step token-wise discrete decoder is already too weak to model the kind of dependency structure that discrete sequence data typically exhibit.

This also explains why the factorized decoder is simultaneously attractive and limited. It is attractive because its output size grows only linearly with sequence length and vocabulary size. It is limited because linear scaling in $L$ cannot directly parameterize an exponentially large joint probability space.

\subsection{How existing model families resolve the trilemma}
\label{app:trilemma_model_families}

The main text then used this barrier to explain how existing model families resolve the trilemma.

Autoregressive models retain \emph{native discrete modeling} and expressive joint structure, but sacrifice \emph{parallel one-step generation}. Their key advantage is that they do not attempt to represent the full sequence law through a single factorized prediction of the form in Eq.~\eqref{eq:factorized_onestep_main}. Instead, they decompose the joint distribution into a sequence of conditional distributions, allowing dependency structure to accumulate through the generation order.

Masked diffusion models make a closely related tradeoff. They also remain in the native discrete space and typically use local token-wise predictions, but they do not rely on a single direct one-step decode. Instead, they recover expressive joint behavior through multiple denoising or refinement steps. In this sense, they preserve native discreteness and local parameterization, but give up the one-step requirement.

At the opposite extreme, exact one-step native discrete generation can be obtained by explicitly parameterizing the full joint distribution over the entire sequence space. However, this requires a number of outputs or equivalent representational complexity that scales as $O(V^L)$, which becomes computationally prohibitive for realistic sequence lengths.

Continuous-relaxation methods such as simplex flow and Fisher-Flow resolve the tension differently. Rather than directly parameterizing a discrete sequence law in one shot, they embed categorical data into a continuous surrogate geometry and perform transport in that continuous space. This preserves local parameterization and can support transport-style parallel generation, but it no longer remains within the class of direct native-discrete one-step decoders covered by Proposition~\ref{prop:factorized_barrier_appendix}.

\subsection{Connection to our design principle}
\label{app:trilemma_ours}

The main-text design principle now follows directly from the above discussion. The obstacle is not that a single forward pass is inherently incapable of producing a complex discrete sequence. Rather, the obstacle is that a \emph{direct} one-step token-wise categorical decoder is too weak to represent a general joint law over sequence space.

Our method therefore avoids assigning that burden to the final discrete decoder. Instead, we first learn a deterministic coupling between sequences and a continuous Gaussian latent space using a normalizing flow encoder trained by maximum likelihood. This coupling absorbs global dependency structure into a continuous representation. We then train a generator to decode from Gaussian noise to the target sequence in a single forward pass.

From the perspective of the trilemma, our method preserves one-step generation at inference time, but escapes the barrier in Proposition~\ref{prop:factorized_barrier_appendix} by giving up direct native-discrete one-shot parameterization and replacing it with a learned continuous surrogate space. This is precisely why the proposition does not contradict our modeling approach: our method is deliberately designed to fall outside the restrictive factorized family in Eq.~\eqref{eq:factorized_family_appendix}.

\subsection{Scope of the claim}
\label{app:trilemma_scope}

Finally, we clarify the scope of the result.

Proposition~\ref{prop:factorized_barrier_appendix} is not a universal impossibility theorem for all one-step generators. Rather, it is a precise structural barrier for a broad and practically important class of models: direct one-step token-wise discrete decoders with independent per-position sampling.

Accordingly, the proposition does \emph{not} imply that one-step discrete generation is impossible in full generality. A full-joint decoder over $[V]^L$ can represent any target law exactly, but is typically intractable because its complexity grows exponentially in sequence length. Likewise, the proposition does \emph{not} apply to methods that introduce a learned continuous surrogate space or coupling-based latent representation before decoding. Such methods step outside the class in Eq.~\eqref{eq:factorized_family_appendix}, which is exactly how they can preserve one-step generation without resorting to an explicit full-joint discrete parameterization.

For this reason, we view the trilemma not as a blanket impossibility statement for one-step generation, but as a useful lens for understanding why direct one-step native-discrete decoding is difficult, why iterative discrete models remain effective, and why a learned continuous coupling provides a natural alternative.

\section{Latent matching and one-step generation error}
\label{app:latent_matching_proof}

We give the formal version of Proposition~\ref{prop:latent_matching}. Let $p_{\mathrm{data}}(x)$ be the data distribution. Let $q_\psi(u\mid x)$ be the Stage~A encoder distribution and let $T_\phi=\mathrm{NF}_\phi$ denote the learned flow map. Define the Stage~A joint distribution $q_\phi(z,x)$ by
\[
x\sim p_{\mathrm{data}}, \qquad
u\sim q_\psi(u\mid x), \qquad
z=T_\phi(u).
\]
Let $q_\phi(z)$ and $q_\phi(x\mid z)$ denote the corresponding latent marginal and conditional sequence distribution. Let $p_Z=\mathcal{N}(0,I)$ be the prior used for one-step sampling, and define
\[
p_\theta^{\mathrm{gen}}(x)
=
\int G_\theta(x\mid z)p_Z(z)\,dz .
\]

\begin{proposition}[Latent matching controls one-step generation error]
\label{prop:latent_matching_full}
For any decoder $G_\theta(x\mid z)$,
\begin{equation}
\mathrm{TV}\!\left(
p_{\mathrm{data}},
p_\theta^{\mathrm{gen}}
\right)
\le
\mathbb{E}_{z\sim q_\phi(z)}
\mathrm{TV}\!\left(q_\phi(\cdot\mid z),G_\theta(\cdot\mid z)\right)
+
\mathrm{TV}\!\left(q_\phi(z),p_Z(z)\right).
\end{equation}
If
\[
\mathbb{E}_{z\sim q_\phi(z)}
\mathrm{KL}\!\left(q_\phi(\cdot\mid z)\,\|\,G_\theta(\cdot\mid z)\right)
\le
\varepsilon_{\mathrm{dec}}
\]
and
\[
\mathrm{KL}\!\left(q_\phi(z)\,\|\,p_Z(z)\right)
\le
\varepsilon_{\mathrm{flow}},
\]
then
\begin{equation}
\mathrm{TV}\!\left(
p_{\mathrm{data}},
p_\theta^{\mathrm{gen}}
\right)
\le
\sqrt{\frac{\varepsilon_{\mathrm{dec}}}{2}}
+
\sqrt{\frac{\varepsilon_{\mathrm{flow}}}{2}} .
\label{eq:latent_matching_final}
\end{equation}
In particular, if $q_\phi(z)=p_Z$ and $G_\theta(\cdot\mid z)=q_\phi(\cdot\mid z)$ for $q_\phi$-almost every $z$, then $p_\theta^{\mathrm{gen}}=p_{\mathrm{data}}$.
\end{proposition}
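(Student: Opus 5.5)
The plan is to insert an intermediate distribution and apply the triangle inequality for total variation. Define the auxiliary sequence law
\[
\tilde p(x)=\int G_\theta(x\mid z)\,q_\phi(z)\,dz,
\]
which is what the decoder would produce if it were fed latents from the Stage~A marginal rather than the prior. The first observation is that marginalizing the Stage~A joint over $z$ returns the data distribution exactly: $\int q_\phi(x\mid z)q_\phi(z)\,dz=q_\phi(x)=p_{\mathrm{data}}(x)$. This holds by construction, since $x\sim p_{\mathrm{data}}$ is drawn first and $z$ is generated from it. Then
\[
\mathrm{TV}(p_{\mathrm{data}},p_\theta^{\mathrm{gen}})\le \mathrm{TV}(p_{\mathrm{data}},\tilde p)+\mathrm{TV}(\tilde p,p_\theta^{\mathrm{gen}}),
\]
and I will bound the two terms separately.

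For the first term, I write $p_{\mathrm{data}}-\tilde p=\int \bigl(q_\phi(\cdot\mid z)-G_\theta(\cdot\mid z)\bigr)q_\phi(z)\,dz$. Taking $\tfrac12\sum_x|\cdot|$ and moving the absolute value inside the integral (Jensen / the triangle inequality for integrals, with Tonelli to swap the sum and the integral) gives at most $\mathbb{E}_{z\sim q_\phi}\mathrm{TV}(q_\phi(\cdot\mid z),G_\theta(\cdot\mid z))$.

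For the second term, I use the data-processing inequality for TV: pushing two latent laws $q_\phi(z)$ and $p_Z$ through the same Markov kernel $G_\theta(\cdot\mid z)$ cannot increase their TV distance. Directly, $\tfrac12\sum_x\bigl|\int G_\theta(x\mid z)(q_\phi(z)-p_Z(z))\,dz\bigr|\le \tfrac12\int |q_\phi(z)-p_Z(z)|\,dz$, because $\sum_x G_\theta(x\mid z)=1$. Combining the two bounds yields the first inequality.

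For the KL version, I apply Pinsker's inequality $\mathrm{TV}\le\sqrt{\mathrm{KL}/2}$ to each term. The latent term is immediate. For the decoder term I apply Pinsker pointwise in $z$ and then Jensen's inequality for the concave map $t\mapsto\sqrt{t}$:
\[
\mathbb{E}_z\sqrt{\mathrm{KL}_z/2}\le\sqrt{\mathbb{E}_z\mathrm{KL}_z/2}\le\sqrt{\varepsilon_{\mathrm{dec}}/2}.
\]
The final claim follows because both error terms then vanish: if $q_\phi(z)=p_Z$ and $G_\theta(\cdot\mid z)=q_\phi(\cdot\mid z)$ for $q_\phi$-almost every $z$, the bound is zero, so TV is zero and the distributions coincide. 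Here I need agreement only $q_\phi$-a.e.; that suffices because $q_\phi=p_Z$ makes the null sets coincide.

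The step most likely to need care is measure-theoretic rather than substantive. It is making sure $q_\phi(z)$ has a density (true, since $u$ has a density and $T_\phi$ is a diffeomorphism), or else phrasing everything in terms of measures so that TV between latent laws is well defined. It is also stating the identity $q_\phi(x)=p_{\mathrm{data}}(x)$ explicitly, since the whole decomposition hinges on it.
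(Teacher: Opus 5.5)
Your proposal is correct and follows the paper's proof essentially step for step: the same intermediate law $\int G_\theta(x\mid z)\,q_\phi(z)\,dz$, the triangle inequality, moving the absolute value inside the integral for the decoding term, data processing through the kernel $G_\theta$ for the latent term, and Pinsker's inequality plus Jensen for the KL version. Your extra measure-theoretic remarks (existence of a density for $q_\phi(z)$, stating $q_\phi(x)=p_{\mathrm{data}}(x)$ explicitly) are harmless refinements rather than needed fixes. TV between latent laws is well defined as a supremum over events whether or not densities exist.
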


\begin{proof}
Let
\begin{equation}
p_\theta^{q}(x)
=
\int G_\theta(x\mid z)\,q_\phi(z)\,dz
\end{equation}
denote the distribution obtained by decoding from the Stage~A latent marginal rather than from the inference prior. By the triangle inequality,
\begin{equation}
\mathrm{TV}\!\left(p_{\mathrm{data}},p_\theta^{\mathrm{gen}}\right)
\le
\mathrm{TV}\!\left(p_{\mathrm{data}},p_\theta^{q}\right)
+
\mathrm{TV}\!\left(p_\theta^{q},p_\theta^{\mathrm{gen}}\right).
\end{equation}

For the first term, the Stage~A joint has marginal
\begin{equation}
p_{\mathrm{data}}(x)
=
\int q_\phi(x\mid z)\,q_\phi(z)\,dz .
\end{equation}
Therefore,
\begin{align}
\mathrm{TV}\!\left(p_{\mathrm{data}},p_\theta^{q}\right)
&=
\frac{1}{2}
\sum_x
\left|
\int
\left(
q_\phi(x\mid z)-G_\theta(x\mid z)
\right)
q_\phi(z)\,dz
\right| \\
&\le
\int
\frac{1}{2}
\sum_x
\left|
q_\phi(x\mid z)-G_\theta(x\mid z)
\right|
q_\phi(z)\,dz \\
&=
\mathbb{E}_{z\sim q_\phi(z)}
\mathrm{TV}\!\left(q_\phi(\cdot\mid z),G_\theta(\cdot\mid z)\right).
\end{align}

For the second term, $G_\theta(\cdot\mid z)$ is a Markov kernel from latents to sequences. Total variation cannot increase under a Markov kernel, so
\begin{equation}
\mathrm{TV}\!\left(p_\theta^{q},p_\theta^{\mathrm{gen}}\right)
\le
\mathrm{TV}\!\left(q_\phi(z),p_Z(z)\right).
\end{equation}
Combining these two bounds proves Eq.~\eqref{eq:latent_matching_tv}.

For the KL-based bound, Pinsker's inequality gives
\begin{equation}
\mathrm{TV}\!\left(q_\phi(\cdot\mid z),G_\theta(\cdot\mid z)\right)
\le
\sqrt{
\frac{1}{2}
\mathrm{KL}\!\left(q_\phi(\cdot\mid z)\,\|\,G_\theta(\cdot\mid z)\right)
}.
\end{equation}
Taking expectation over $z\sim q_\phi(z)$ and applying Jensen's inequality,
\begin{align}
\mathbb{E}_{z\sim q_\phi(z)}
\mathrm{TV}\!\left(q_\phi(\cdot\mid z),G_\theta(\cdot\mid z)\right)
&\le
\mathbb{E}_{z\sim q_\phi(z)}
\sqrt{
\frac{1}{2}
\mathrm{KL}\!\left(q_\phi(\cdot\mid z)\,\|\,G_\theta(\cdot\mid z)\right)
} \\
&\le
\sqrt{
\frac{1}{2}
\mathbb{E}_{z\sim q_\phi(z)}
\mathrm{KL}\!\left(q_\phi(\cdot\mid z)\,\|\,G_\theta(\cdot\mid z)\right)
} \\
&\le
\sqrt{\frac{\varepsilon_{\mathrm{dec}}}{2}} .
\end{align}
Finally, applying Pinsker's inequality to the latent marginal gives
\begin{equation}
\mathrm{TV}\!\left(q_\phi(z),p_Z(z)\right)
\le
\sqrt{
\frac{1}{2}
\mathrm{KL}\!\left(q_\phi(z)\,\|\,p_Z(z)\right)
}
\le
\sqrt{\frac{\varepsilon_{\mathrm{flow}}}{2}},
\end{equation}
which proves Eq.~\eqref{eq:latent_matching_final}. The final consistency statement follows by setting both terms in Eq.~\eqref{eq:latent_matching_tv} to zero.
\end{proof}

\section{Few-step latent-conditioned masked diffusion}
\label{app:few_step_lcmdm}

This appendix gives the full training and sampling procedures for the few-step
extension introduced in Section~\ref{sec:few_step_lcmdm}. The extension uses the
same Stage~A latent--sequence pairs as the one-step model. After Stage~A is
trained and frozen, each training sequence is mapped to a Gaussian latent
$z^{(i)}=\mathrm{NF}_\phi(E_\psi(x^{(i)},\epsilon^{(i)}))$ with
$\epsilon^{(i)}\sim\mathcal{N}(0,I)$, which yields the paired dataset
$\mathcal{D}_Z=\{(z^{(i)},x^{(i)})\}_{i=1}^{N}$. The one-step model trains a
direct decoder $G_\theta(z)$ on this paired dataset. The few-step variant
instead trains a latent-conditioned masked denoiser $H_\theta(x_t,z)$. Thus,
Stage~A is unchanged; only the Stage~B generator is replaced by a masked
diffusion model conditioned on the Gaussian latent.

\subsection{Training}
\label{app:few_step_lcmdm_training}

For each paired example $(z,x)\sim\mathcal{D}_Z$, we sample a masking time
$t\sim\mathrm{Unif}(0,1)$ and independently mask each position with probability
$t$. Let $m\in\{0,1\}^{T}$ be the resulting binary mask, where $m_i=1$ indicates
that position $i$ is masked. The corrupted sequence $x_t$ is obtained by setting
$(x_t)_i=\mathsf{M}$ if $m_i=1$ and $(x_t)_i=x_i$ otherwise. The denoiser
receives only the masked sequence and the latent variable, $H_\theta(x_t,z)$,
and predicts the clean sequence distribution. The loss is evaluated on the
masked positions:
\begin{equation}
  \mathcal{L}_{\mathrm{MDM}}(\theta)
  =
  -\mathbb{E}_{(z,x)\sim\mathcal{D}_Z,\;t\sim\mathrm{Unif}(0,1),\;m}
  \left[
  \frac{1}{\sum_{i=1}^{T}m_i}
  \sum_{i:m_i=1}
  \log p_\theta(x_i\mid x_t,z)
  \right].
  \label{eq:appendix_few_step_lcmdm_loss}
\end{equation}

The training procedure is summarized in Algorithm~\ref{alg:lcmdm_training}.

\begin{algorithm}[H]
  \caption{Training latent-conditioned masked diffusion}
  \label{alg:lcmdm_training}
  \small
  \begin{algorithmic}[1]
    \State \textbf{Input:} paired latent--sequence dataset
    $\mathcal{D}_Z=\{(z^{(i)},x^{(i)})\}_{i=1}^{N}$
    \State \textbf{Model:} latent-conditioned masked denoiser $H_\theta$
    \For{training updates}
      \State sample a minibatch $(z,x)\sim\mathcal{D}_Z$
      \State sample masking times $t\sim\mathrm{Unif}(0,1)$
      \State sample masks $m_i\sim\mathrm{Bernoulli}(t)$ independently across positions
      \State resample $m$ if $\sum_{i=1}^{T}m_i=0$
      \State construct $x_t$ by replacing positions with $m_i=1$ by $\mathsf{M}$
      \State compute logits $\ell \gets H_\theta(x_t,z)$
      \State update $\theta$ by minimizing the masked-token loss in
      Eq.~\eqref{eq:appendix_few_step_lcmdm_loss}
    \EndFor
  \end{algorithmic}
\end{algorithm}
\subsection{Sampling}
\label{app:few_step_lcmdm_sampling}

At inference time, we first sample a Gaussian latent
$z\sim\mathcal{N}(0,I)$ and initialize the sequence as fully masked,
$x^{(0)}=(\mathsf{M},\ldots,\mathsf{M})$. We then run a latent-conditioned
P2-self sampler for $K$ iterations. Let $\kappa:[0,1]\to[0,1]$ be a monotone
unmasking schedule with $\kappa(0)=0$ and $\kappa(1)=1$. At iteration $i$, we
set $t_i=i/K$ and choose the number of positions that should remain masked as
$\lfloor L(1-\kappa(t_i))\rfloor$, where $L$ is the sequence length.

Given the current partially generated sequence $x^{(i-1)}$, the denoiser
predicts logits
$\ell^{(i)}=H_\theta(x^{(i-1)},z)$. We sample a candidate clean token for each
non-fixed position using the Gumbel-max form of categorical sampling with
temperature $\tau_i$, and use the corresponding log-probability as a confidence
score. P2-self then plans the next mask set by selecting the lowest-confidence
positions to remain masked. This differs from a purely monotone unmasking
procedure: positions that were previously masked may be revealed, while positions
that were previously revealed may be remasked if their confidence is low. For
previously revealed positions, we multiply the confidence score by a remasking
strength $\eta$. Since the score is a log-probability, larger $\eta$ makes such
positions easier to remask, whereas smaller $\eta$ makes already revealed tokens
more persistent.

Thus, each iteration consists of three operations: propose candidate clean
tokens, select the low-confidence positions that should remain masked, and write
candidate tokens only into positions that were masked and are no longer selected
to remain masked. After the final step, all remaining mask tokens are filled
using the last denoiser proposal. The Gaussian latent $z$ is fixed throughout the
trajectory and provides global sequence-level conditioning at every denoising
step. When $K=1$, the method reduces to one-step latent-conditioned masked
decoding; for $K>1$, it performs iterative P2-self refinement with possible
remasking.

The full procedure is summarized in Algorithm~\ref{alg:lcmdm_sampling}.

\begin{algorithm}[H]
  \caption{Few-step latent-conditioned P2-self sampling}
  \label{alg:lcmdm_sampling}
  \small
  \begin{algorithmic}[1]
    \Require number of sampling steps $K$, sequence length $L$, mask token
    $\mathsf{M}$, denoiser $H_\theta$, unmasking schedule $\kappa$,
    temperatures $\{\tau_i\}_{i=1}^{K}$, remasking strength $\eta$
    \State sample $z\sim\mathcal{N}(0,I)$
    \State initialize $x^{(0)}\gets(\mathsf{M},\ldots,\mathsf{M})$
    \State set $f_j\gets 1$ if $x^{(0)}_j\neq \mathsf{M}$, and $f_j\gets 0$ otherwise
    \For{$i=1,\ldots,K$}
      \State $t_i\gets i/K$
      \State compute logits $\ell^{(i)}\gets H_\theta(x^{(i-1)},z)$
      \For{each position $j$ with $f_j=0$}
        \State sample Gumbel noise $g_{j,v}\sim\mathrm{Gumbel}(0,1)$ for each token $v$
        \State $\tilde{\ell}^{(i)}_{j,v}\gets \ell^{(i)}_{j,v}/\tau_i+g_{j,v}$
        \State propose $\hat{x}^{(i)}_j\gets \arg\max_v \tilde{\ell}^{(i)}_{j,v}$
        \State score $s^{(i)}_j\gets
        \log\mathrm{softmax}(\tilde{\ell}^{(i)}_{j})_{\hat{x}^{(i)}_j}$
      \EndFor
      \State set $s^{(i)}_j\gets+\infty$ for all fixed positions $j$ with $f_j=1$
      \State define previously generated positions
      $\mathcal{U}^{(i)}\gets\{j:x^{(i-1)}_j\neq\mathsf{M},\, f_j=0\}$
      \State rescale $s^{(i)}_j\gets \eta s^{(i)}_j$ for all
      $j\in\mathcal{U}^{(i)}$
      \State set $m_i\gets \lfloor L(1-\kappa(t_i))\rfloor$
      \State let $\mathcal{R}^{(i)}$ be the $m_i$ non-fixed positions with
      the lowest scores $s^{(i)}_j$
      \State set $x^{(i)}\gets x^{(i-1)}$
      \State set $x^{(i)}_j\gets\mathsf{M}$ for all $j\in\mathcal{R}^{(i)}$
      \State define newly revealed positions
      $\mathcal{A}^{(i)}\gets
      \{j:x^{(i-1)}_j=\mathsf{M}\}\setminus\mathcal{R}^{(i)}$
      \State set $x^{(i)}_j\gets \hat{x}^{(i)}_j$ for all
      $j\in\mathcal{A}^{(i)}$
    \EndFor
    \State fill any remaining masked position $j$ by setting
    $x^{(K)}_j\gets \hat{x}^{(K)}_j$
    \State \Return $x^{(K)}$
  \end{algorithmic}
\end{algorithm}

\section{Guidance algorithms}
\label{app:guidance}

This appendix gives the procedural details for the guidance mechanisms
summarized in Section~\ref{sec:guidance}. The goal is to specialize standard
guidance methods to \nameshort. Since generation consists of
sampling $z\sim\mathcal{N}(0,I)$ and decoding once, guidance acts either on the
final logits, on the latent variable before decoding, or on the generator
parameters during fine-tuning.

\begin{algorithm}[H]
\caption{Classifier-free guidance for one-step generation}
\label{alg:cfg_guidance}
\small
\begin{algorithmic}[1]
\Require generator $G_\theta$, condition $y$, guidance scale $s$, temperature $\tau$
\State sample $z\sim\mathcal{N}(0,I)$
\State compute conditional logits $\ell^c\gets G_\theta(z,y)$
\State compute unconditional logits $\ell^u\gets G_\theta(z,\varnothing)$
\State combine logits $\ell^{\mathrm{cfg}}\gets \ell^u+s(\ell^c-\ell^u)$
\State sample $\hat{x}_t\sim\mathrm{Cat}(\mathrm{softmax}(\ell^{\mathrm{cfg}}_t/\tau))$
for all positions $t$ in parallel
\State \Return $\hat{x}$
\end{algorithmic}
\end{algorithm}

\begin{algorithm}[H]
\caption{Latent-space classifier guidance}
\label{alg:latent_classifier_guidance}
\small
\begin{algorithmic}[1]
\Require generator $G_\theta$, reward $R(\cdot,y)$, condition $y$
\Require step size $\eta$, guidance steps $K$, relaxation $\rho$, sampling temperature $\tau$
\State sample $z^{0}\sim\mathcal{N}(0,I)$
\For{$k=0,\ldots,K-1$}
  \State compute logits $\ell^k\gets G_\theta(z^k,y)$
  \State compute relaxed output $\tilde{x}^k\gets \rho(\ell^k)$
  \State update $z^{k+1}\gets z^k+\eta\nabla_{z^k}R(\tilde{x}^k,y)$
\EndFor
\State compute final logits $\ell^K\gets G_\theta(z^K,y)$
\State sample $\hat{x}_t\sim\mathrm{Cat}(\mathrm{softmax}(\ell^K_t/\tau))$
for all positions $t$ in parallel
\State \Return $\hat{x}$
\end{algorithmic}
\end{algorithm}

\begin{algorithm}[H]
\caption{Reward fine-tuning for one-step generation}
\label{alg:reward_finetuning}
\small
\begin{algorithmic}[1]
\Require pretrained generator $G_{\theta_0}$, trainable generator $G_\theta$
\Require reward $R(\cdot,y)$, relaxation $\rho$
\Require weights $\lambda_{\mathrm{reward}}$, $\lambda_{\mathrm{anchor}}$
\State initialize $\theta\gets\theta_0$
\For{fine-tuning updates}
  \State sample condition $y$ and latent $z\sim\mathcal{N}(0,I)$
  \State compute relaxed output $\tilde{x}_\theta\gets \rho(G_\theta(z,y))$
  \State compute anchor penalty $\mathcal{R}(\theta,\theta_0)$
  \State update $\theta$ by minimizing
  $\mathcal{L}_{\mathrm{FT}}
  =-\lambda_{\mathrm{reward}}R(\tilde{x}_\theta,y)
  +\lambda_{\mathrm{anchor}}\mathcal{R}(\theta,\theta_0)$
\EndFor
\State \Return fine-tuned generator $G_\theta$
\end{algorithmic}
\end{algorithm}

\subsection{Relaxations and implementation choices}
\label{app:guidance_relaxations}

For classifier guidance and reward fine-tuning, the reward gradient must pass
through the model output. We use two final-layer relaxations. The soft relaxation
sets $\rho(\ell)=\mathrm{softmax}(\ell/\tau_r)$, or the corresponding sigmoid
relaxation for binary outputs, and evaluates the reward on these continuous
probabilities. This gives a low-variance gradient but introduces a mismatch
between relaxed optimization and final discrete sampling. The Gumbel-based
relaxation instead uses a Gumbel-Softmax or straight-through estimator
\citep{jang2017gumbel,maddison2017concrete}, which more closely matches the
categorical sampling path but can be noisier.

In all guidance experiments, CFG uses conditioning dropout during training and
two generator evaluations at sampling time, one conditional and one
unconditional. Latent-space classifier guidance keeps the generator fixed and
uses a small number of gradient steps on $z$ before the final decode. Reward
fine-tuning initializes from the pretrained generator and uses an anchor penalty,
implemented as a logit-level or distribution-level matching loss, to discourage
drift from the data distribution while optimizing the reward.

\section{Experimental details}
\label{app:exp_details}

This appendix collects the implementation details used for the experiments in Section~\ref{sec:experiments}. The goal is to make the retained benchmarks easy to reproduce while keeping the main paper focused on the empirical argument. Table~\ref{tab:appendix_benchmark_summary} summarizes the three tasks and their evaluation protocols, and the following subsections provide benchmark-specific training and evaluation details.

\begin{table}[t]
\centering
\caption{
Summary of the experimental benchmarks used in the paper.
}
\label{tab:appendix_benchmark_summary}
\footnotesize
\setlength{\tabcolsep}{4.5pt}
\renewcommand{\arraystretch}{1.1}
\begin{tabular}{@{}L{0.22\linewidth}L{0.29\linewidth}L{0.16\linewidth}L{0.25\linewidth}@{}}
\toprule
Benchmark & Data / setting & Metric & Protocol source \\
\midrule
MNIST-Binary & Unconditional generation on binarized $28\times 28$ MNIST digits & FID $\downarrow$ & Follows the comparison protocol used by \CFM \\
Fly Brain enhancer generation & Conditional DNA sequence generation on DeepFlyBrain, 81 target classes, sequence length 500 & FBD $\downarrow$ & Follows the enhancer-generation benchmark of Dirichlet Flow Matching \\
LM1B & Unconditional one-step language generation on LM1B, sequence length 128 & Gen-PPL $\downarrow$, entropy & Follows the LM1B evaluation style of FLM \\
\bottomrule
\end{tabular}
\end{table}

\subsection{MNIST-Binary details}
\label{app:exp_mnist}
\begin{figure}[t]
  \centering
  \includegraphics[width=0.5\linewidth]{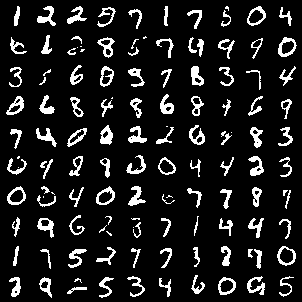}
  \caption{Unconditional MNIST-Binary generations from our model.}
  \label{fig:uncond_mnist_results}
\end{figure}
We use the standard torchvision MNIST split and binarize each pixel by thresholding the grayscale value at $0.5$. Each image is then flattened into a length-784 binary sequence for training, while evaluation samples are reshaped back to $28\times 28$ images. The main paper reports the unconditional setting (\texttt{--mode uncond}), and the code also supports a class-conditional variant.

Training uses a two-stage schedule over 200 epochs, with the first half used for Stage~A and the second half for Stage~B. Stage~A trains the binary autoencoder and latent flow, while Stage~B freezes them and trains the one-step decoder. We optimize all trainable parameters with AdamW, learning rate $2\times 10^{-4}$, weight decay $10^{-4}$, and a cosine decay schedule with one-epoch warmup. We evaluate FID with 1{,}000 generated samples against the full MNIST training set, using the \texttt{torchmetrics} implementation of Fr\'echet Inception Distance.

\begin{table}[t]
\centering
\caption{
MNIST-Binary training configuration used for the main result.
}
\label{tab:appendix_mnist_config}
\footnotesize
\setlength{\tabcolsep}{5pt}
\renewcommand{\arraystretch}{1.08}
\begin{tabular}{@{}ll@{}}
\toprule
Item & Value \\
\midrule
Dataset & MNIST, binarized at threshold 0.5 \\
Task & Unconditional one-step generation \\
Batch size & 256 \\
Epochs & 200 total (100 Stage~A, 100 Stage~B) \\
Optimizer & AdamW \\
Learning rate & $2\times 10^{-4}$ \\
Weight decay & $10^{-4}$ \\
Latent spatial size & $7\times 7$ \\
Latent channels & 16 \\
Latent noise std & 0.5 \\
Flow width / blocks / layers / heads & 128 / 5 / 5 / 4 \\
Generator width / depth & 512 / 8 \\
FID evaluation & 1{,}000 generated samples, one step \\
Random seed & 0 \\
\bottomrule
\end{tabular}
\end{table}

\subsection{Fly Brain enhancer-generation details}
\label{app:exp_enhancer}

For DNA sequence design, we follow the enhancer benchmark from Dirichlet Flow Matching and retain only the Fly Brain dataset in the paper. The dataset configuration used by our code corresponds to \texttt{DeepFlyBrain\_data.pkl}, with 81 target classes and sequence length 500 over the alphabet $\{A,C,G,T\}$. The training loader uses the benchmark train split and the default validation split (\texttt{valid} rather than \texttt{test}). Evaluation uses FBD in the embedding space of the frozen benchmark classifier, matching the protocol implemented for prior sequence-design baselines. Our code also reports a \texttt{flow\_reverse\_fbd} diagnostic for the reverse map of Stage~A, but the main paper reports the one-step generator FBD.

The Fly Brain experiment is trained for 300 epochs. We use the first 150 epochs for Stage~A and the remaining 150 for Stage~B. The run in \texttt{train\_enhancer\_fb.sh} uses batch size 128, compression ratio $0.05$ (which corresponds to latent length 25 for length-500 sequences), latent width 256, dropout 0.1, and flow depth 8 blocks. Parameters not overridden in the launch script use the defaults in \texttt{one\_step\_enhancer.py}; in particular, the optimizer is AdamW with learning rate $2\times 10^{-4}$ and weight decay $10^{-4}$. No external embedding alignment or Stage~B alignment is enabled in the reported configuration.

\begin{table}[t]
\centering
\caption{
Fly Brain enhancer-generation configuration used for the main result.
}
\label{tab:appendix_enhancer_config}
\footnotesize
\setlength{\tabcolsep}{5pt}
\renewcommand{\arraystretch}{1.08}
\begin{tabular}{@{}ll@{}}
\toprule
Item & Value \\
\midrule
Dataset & DeepFlyBrain (\texttt{fb}) \\
Sequence length / alphabet & 500 / $\{A,C,G,T\}$ \\
Number of target classes & 81 \\
Train / eval split & train / valid \\
Metric & FBD in frozen classifier embedding space \\
Batch size / eval batch size & 128 / 128 \\
Epochs & 300 total (150 Stage~A, 150 Stage~B) \\
Optimizer & AdamW \\
Learning rate / weight decay & $2\times 10^{-4}$ / $10^{-4}$ \\
Compression ratio & 0.05 \\
Latent length & 25 \\
Latent channels & 256 \\
Latent noise std & 1.0 \\
Encoder width / depth & 256 / 8 \\
Decoder depth & 8 \\
Generator width / depth & 512 / 8 \\
Flow width / blocks / layers / heads & 128 / 8 / 4 / 4 \\
Dropout & 0.1 \\
$\beta_{\mathrm{KL}}$ & 1.0 \\
Alignment losses & Off in the reported run \\
Random seed & 0 \\
\bottomrule
\end{tabular}
\end{table}

\subsection{LM1B text-generation details}
\label{app:exp_lm1b}

For text generation, we use LM1B tokenized with the \texttt{Qwen/Qwen2.5-0.5B} tokenizer. The preprocessing pipeline truncates or pads each sequence to length 128 and stores tokenized training and test splits as memory-mapped NumPy arrays. The underlying dataset loader uses \texttt{dvruette/lm1b} with the \texttt{\string~parquet} revision. All text experiments use the same two-stage training pipeline as the rest of the paper, but with a frozen pretrained language-model encoder in Stage~A and a Transformer decoder in Stage~B.

Stage~A uses a frozen Qwen2.5-0.5B encoder, a learned linear projection from 896 to 256 dimensions, a RealNVP flow with 12 coupling layers, and a lightweight Transformer decoder that reconstructs Qwen-tokenized text. The dequantization noise scale is 0.2. Stage~A is trained for 200k optimization steps with batch size 128, AdamW, learning rate $3\times 10^{-4}$, weight decay 0.01, and 2k warmup steps. The flow-weight coefficient is annealed from 0.5 to 1.0 over the first 5\% of training.

Stage~B uses a one-step Transformer generator with width 768, depth 12, 12 attention heads, and feed-forward dimension 2048. It is trained for 400k steps with batch size 32, AdamW, learning rate $2\times 10^{-4}$, weight decay 0.01, 2k warmup steps, cosine restarts every 50k steps, gradient clipping at 1.0, and EMA decay 0.999. In addition to token-level cross entropy, Stage~B uses KL distillation from a frozen Qwen2.5-0.5B causal LM with distillation weight 1.0 and temperature 2.0. During training, 30\% of updates replace the Stage~A latent with pure Gaussian noise scaled by 0.90 to better match one-step prior sampling at inference time.

For evaluation, we generate 1{,}024 samples from the one-step model, decode them with the Qwen tokenizer, and score the resulting text with GPT-2 Large. We report Gen-PPL and average unigram entropy. Table~\ref{tab:lm1b-entropy-genppl-raw} lists the raw values for the temperature sweep with top-$p$ 0.95 and latent scale 0.8, and Figure~\ref{fig:lm1b-entropy-genppl-tradeoff-app} repeats the same tradeoff plot for completeness. We also compute an LM1B reference entropy of 4.31 and a reference Gen-PPL using the same decode-and-score pipeline on real LM1B test sentences. Because our generated text is decoded with a Qwen tokenizer before scoring with GPT-2 Large, the absolute Gen-PPL values are tied to this evaluation pipeline; we therefore use the FLM comparison primarily as a relative benchmark.

\begin{table}[t]
\centering
\caption{
Raw entropy and generative perplexity values used in Figure~\ref{fig:lm1b-entropy-genppl-tradeoff} for one-step LM1B generation. For our method, we report the temperature sweep with fixed top-$p=0.95$ and latent scale $z_{\mathrm{scale}}=0.8$. Lower generative perplexity and higher entropy are preferred.
}
\label{tab:lm1b-entropy-genppl-raw}
\small
\setlength{\tabcolsep}{5.5pt}
\renewcommand{\arraystretch}{1.08}
\begin{tabular}{@{}llccc@{}}
\toprule
Category & Method & Temperature & Gen. PPL $\downarrow$ & Entropy $\uparrow$ \\
\midrule
Baseline & Duo + DCD      & --   & 1224.52 & 4.33 \\
Baseline & Duo + Di4C     & --   &  292.94 & 3.79 \\
Baseline & MDLM + SDTT    & --   & 1429.48 & 4.31 \\
Baseline & MDLM + Di4C    & --   & 1217.10 & 4.38 \\
Baseline & CFM            & --   &  269.72 & 3.10 \\
Baseline & FMLM           & --   &  119.34 & 4.16 \\
Baseline & DFM (PSD)      & --   &   94.08 & 4.06 \\
Baseline & DFM (ESD)      & --   &   68.11 & 3.79 \\
\midrule
Ours & Ours              & 0.50 &   45.82 & 3.6088 \\
Ours & Ours              & 0.60 &   76.19 & 4.0841 \\
Ours & Ours              & 0.65 &   98.96 & 4.3192 \\
Ours & Ours              & 0.70 &  134.06 & 4.5716 \\
\midrule
Reference & LM1B test set & --   &   47.47 & 4.31 \\
\bottomrule
\end{tabular}
\end{table}

\begin{figure}[t]
  \centering
  \includegraphics[width=0.82\linewidth]{figs/lm1b_entropy_genppl_tradeoff_v5.pdf}
  \caption{
  Entropy--perplexity tradeoff for one-step LM1B generation, reproduced from Figure~\ref{fig:lm1b-entropy-genppl-tradeoff}. The appendix copy is included with the evaluation details so that the sampling settings and metric definitions can be read alongside the result.
  }
  \label{fig:lm1b-entropy-genppl-tradeoff-app}
\end{figure}

\begin{table}[t]
\centering
\caption{
LM1B configuration used for the reported one-step text result.
}
\label{tab:appendix_lm1b_config}
\footnotesize
\setlength{\tabcolsep}{5pt}
\renewcommand{\arraystretch}{1.08}
\begin{tabular}{@{}ll@{}}
\toprule
Item & Value \\
\midrule
Dataset & LM1B (\texttt{dvruette/lm1b}, \texttt{\string~parquet} revision) \\
Tokenizer & Qwen2.5-0.5B tokenizer \\
Vocabulary size & 151{,}936 \\
Sequence length & 128 \\
Stage~A encoder & Frozen Qwen2.5-0.5B, hidden size 896 \\
Stage~A projection dim & 256 \\
Stage~A flow & RealNVP, 12 coupling layers, hidden dim 1024 \\
Stage~A optimizer / LR / WD & AdamW / $3\times 10^{-4}$ / 0.01 \\
Stage~A batch size / steps & 128 / 200k \\
Stage~B generator & Transformer, width 768, depth 12, heads 12, FFN 2048 \\
Stage~B optimizer / LR / WD & AdamW / $2\times 10^{-4}$ / 0.01 \\
Stage~B batch size / steps & 32 / 400k \\
KD teacher & Frozen Qwen2.5-0.5B causal LM \\
KD weight / temperature & 1.0 / 2.0 \\
Gaussian-latent mixing prob. & 0.30 \\
Inference latent scale & 0.90 \\
Evaluation samples & 1{,}024 \\
Evaluation settings & temperature 0.8, top-$p$ 0.95, GPT-2 Large scorer \\
Reported checkpoint & Stage~B step 152{,}038 \\
\bottomrule
\end{tabular}
\end{table}

\subsection{Asset licenses and provenance}
\label{app:asset_licenses}

The main assets used in the experiments are the following: MNIST, which we cite as CC BY-SA 3.0; the DeepFlyBrain model, which the Kipoi model page lists as MIT; the Fly Brain / DeepFlyBrain article, which is CC BY 4.0; the associated Zenodo design package, which is distributed under an academic non-commercial software license; LM1B benchmark code/scripts, which are Apache-2.0, while the underlying corpus license is not explicitly stated on the official benchmark page; the GPT-2 model/tokenizer, which uses OpenAI's modified MIT license; and the Qwen2.5-0.5B model/tokenizer, which is Apache-2.0.

\subsection{Compute and implementation notes}
\label{app:exp_compute}

The MNIST and Fly Brain experiments were launched through single-node SLURM jobs requesting one NVIDIA H200 GPU, 32\,GB host memory, 12 CPU cores, and a two-day wall-clock limit. The text experiments use the local training code and configuration files described above, but the current draft does not yet include a consolidated summary of text training wall-clock time or total compute. Across benchmarks, we use the default random seed 0 in the released scripts unless otherwise noted.

\newpage
\section*{NeurIPS Paper Checklist}

\begin{enumerate}

\item {\bf Claims}
    \item[] Question: Do the main claims made in the abstract and introduction accurately reflect the paper's contributions and scope?
    \item[] Answer: \answerYes{}
    \item[] Justification: The abstract and introduction state the paper's actual contributions: a two-stage model for one-step discrete generation and empirical results on MNIST-Binary, Fly Brain, and LM1B. The appendix supplies the expressivity argument for direct one-step token-wise decoders, and the paper also notes the main limitation that stronger iterative baselines remain better in harder settings (Sections~\ref{sec:method}, \ref{sec:experiments}, Appendix~\ref{app:trilemma}, and the Conclusion).
    \item[] Guidelines:
    \begin{itemize}
        \item The answer NA means that the abstract and introduction do not include the claims made in the paper.
        \item The abstract and/or introduction should clearly state the claims made, including the contributions made in the paper and important assumptions and limitations. A No or NA answer to this question will not be perceived well by the reviewers. 
        \item The claims made should match theoretical and experimental results, and reflect how much the results can be expected to generalize to other settings. 
        \item It is fine to include aspirational goals as motivation as long as it is clear that these goals are not attained by the paper. 
    \end{itemize}

\item {\bf Limitations}
    \item[] Question: Does the paper discuss the limitations of the work performed by the authors?
    \item[] Answer: \answerYes{}
    \item[] Justification: The paper states that the model is competitive in the one-step regime but still trails the strongest iterative baselines on harder settings, and the Conclusion limits the scale of the current study to moderate-size models and benchmarks. The appendix also clarifies that the text benchmark uses a benchmark-specific evaluation pipeline and that the reported gains should be read in that context (Sections~\ref{sec:method}, \ref{sec:experiments}, \ref{app:exp_lm1b}).
    \item[] Guidelines:
    \begin{itemize}
        \item The answer NA means that the paper has no limitation while the answer No means that the paper has limitations, but those are not discussed in the paper. 
        \item The authors are encouraged to create a separate "Limitations" section in their paper.
        \item The paper should point out any strong assumptions and how robust the results are to violations of these assumptions (e.g., independence assumptions, noiseless settings, model well-specification, asymptotic approximations only holding locally). The authors should reflect on how these assumptions might be violated in practice and what the implications would be.
        \item The authors should reflect on the scope of the claims made, e.g., if the approach was only tested on a few datasets or with a few runs. In general, empirical results often depend on implicit assumptions, which should be articulated.
        \item The authors should reflect on the factors that influence the performance of the approach. For example, a facial recognition algorithm may perform poorly when image resolution is low or images are taken in low lighting. Or a speech-to-text system might not be used reliably to provide closed captions for online lectures because it fails to handle technical jargon.
        \item The authors should discuss the computational efficiency of the proposed algorithms and how they scale with dataset size.
        \item If applicable, the authors should discuss possible limitations of their approach to address problems of privacy and fairness.
        \item While the authors might fear that complete honesty about limitations might be used by reviewers as grounds for rejection, a worse outcome might be that reviewers discover limitations that aren't acknowledged in the paper. The authors should use their best judgment and recognize that individual actions in favor of transparency play an important role in developing norms that preserve the integrity of the community. Reviewers will be specifically instructed to not penalize honesty concerning limitations.
    \end{itemize}

\item {\bf Theory assumptions and proofs}
    \item[] Question: For each theoretical result, does the paper provide the full set of assumptions and a complete (and correct) proof?
    \item[] Answer: \answerYes{}
    \item[] Justification: The theoretical result is the expressivity barrier for direct one-step token-wise decoders. The appendix states the assumptions, defines the factorized family, and gives a complete proof by both dimension counting and an explicit counterexample (Appendix~\ref{app:trilemma}).
    \item[] Guidelines:
    \begin{itemize}
        \item The answer NA means that the paper does not include theoretical results. 
        \item All the theorems, formulas, and proofs in the paper should be numbered and cross-referenced.
        \item All assumptions should be clearly stated or referenced in the statement of any theorems.
        \item The proofs can either appear in the main paper or the supplemental material, but if they appear in the supplemental material, the authors are encouraged to provide a short proof sketch to provide intuition. 
        \item Inversely, any informal proof provided in the core of the paper should be complemented by formal proofs provided in appendix or supplemental material.
        \item Theorems and Lemmas that the proof relies upon should be properly referenced. 
    \end{itemize}

    \item {\bf Experimental result reproducibility}
    \item[] Question: Does the paper fully disclose all the information needed to reproduce the main experimental results of the paper to the extent that it affects the main claims and/or conclusions of the paper (regardless of whether the code and data are provided or not)?
    \item[] Answer: \answerYes{}
    \item[] Justification: The appendix specifies the datasets, splits, preprocessing, model components, training schedules, optimizers, learning rates, evaluation metrics, sampling settings, and reported checkpoints for the three main benchmarks (Appendix~\ref{app:exp_details}, especially Sections~\ref{app:exp_mnist}, \ref{app:exp_enhancer}, and \ref{app:exp_lm1b}).
    \item[] Guidelines:
    \begin{itemize}
        \item The answer NA means that the paper does not include experiments.
        \item If the paper includes experiments, a No answer to this question will not be perceived well by the reviewers: Making the paper reproducible is important, regardless of whether the code and data are provided or not.
        \item If the contribution is a dataset and/or model, the authors should describe the steps taken to make their results reproducible or verifiable. 
        \item Depending on the contribution, reproducibility can be accomplished in various ways. For example, if the contribution is a novel architecture, describing the architecture fully might suffice, or if the contribution is a specific model and empirical evaluation, it may be necessary to either make it possible for others to replicate the model with the same dataset, or provide access to the model. In general, releasing code and data is often one good way to accomplish this, but reproducibility can also be provided via detailed instructions for how to replicate the results, access to a hosted model (e.g., in the case of a large language model), releasing of a model checkpoint, or other means that are appropriate to the research performed.
        \item While NeurIPS does not require releasing code, the conference does require all submissions to provide some reasonable avenue for reproducibility, which may depend on the nature of the contribution. For example
        \begin{enumerate}
            \item If the contribution is primarily a new algorithm, the paper should make it clear how to reproduce that algorithm.
            \item If the contribution is primarily a new model architecture, the paper should describe the architecture clearly and fully.
            \item If the contribution is a new model (e.g., a large language model), then there should either be a way to access this model for reproducing the results or a way to reproduce the model (e.g., with an open-source dataset or instructions for how to construct the dataset).
            \item We recognize that reproducibility may be tricky in some cases, in which case authors are welcome to describe the particular way they provide for reproducibility. In the case of closed-source models, it may be that access to the model is limited in some way (e.g., to registered users), but it should be possible for other researchers to have some path to reproducing or verifying the results.
        \end{enumerate}
    \end{itemize}

\item {\bf Open access to data and code}
    \item[] Question: Does the paper provide open access to the data and code, with sufficient instructions to faithfully reproduce the main experimental results, as described in supplemental material?
    \item[] Answer: \answerNo{}
    \item[] Justification: The paper provides detailed reproduction instructions, but the code and any released checkpoints will be made public only after acceptance rather than being open at submission time. We therefore describe a reproducible route, but not current open access to the full artifact set (Appendix~\ref{app:exp_details}).
    \item[] Guidelines:
    \begin{itemize}
        \item The answer NA means that paper does not include experiments requiring code.
        \item Please see the NeurIPS code and data submission guidelines (\url{https://nips.cc/public/guides/CodeSubmissionPolicy}) for more details.
        \item While we encourage the release of code and data, we understand that this might not be possible, so "No" is an acceptable answer. Papers cannot be rejected simply for not including code, unless this is central to the contribution (e.g., for a new open-source benchmark).
        \item The instructions should contain the exact command and environment needed to run to reproduce the results. See the NeurIPS code and data submission guidelines (\url{https://nips.cc/public/guides/CodeSubmissionPolicy}) for more details.
        \item The authors should provide instructions on data access and preparation, including how to access the raw data, preprocessed data, intermediate data, and generated data, etc.
        \item The authors should provide scripts to reproduce all experimental results for the new proposed method and baselines. If only a subset of experiments are reproducible, they should state which ones are omitted from the script and why.
        \item At submission time, to preserve anonymity, the authors should release anonymized versions (if applicable).
        \item Providing as much information as possible in supplemental material (appended to the paper) is recommended, but including URLs to data and code is permitted.
    \end{itemize}

\item {\bf Experimental setting/details}
    \item[] Question: Does the paper specify all the training and test details (e.g., data splits, hyperparameters, how they were chosen, type of optimizer, etc.) necessary to understand the results?
    \item[] Answer: \answerYes{}
    \item[] Justification: The appendix gives the dataset splits, preprocessing, training schedules, optimizer settings, latent sizes, flow depths, generator widths, evaluation budgets, and checkpoint choices for all three benchmarks. The remaining implementation details are also summarized in tables for each benchmark (Appendix~\ref{app:exp_details}).
    \item[] Guidelines:
    \begin{itemize}
        \item The answer NA means that the paper does not include experiments.
        \item The experimental setting should be presented in the core of the paper to a level of detail that is necessary to appreciate the results and make sense of them.
        \item The full details can be provided either with the code, in appendix, or as supplemental material.
    \end{itemize}

\item {\bf Experiment statistical significance}
    \item[] Question: Does the paper report error bars suitably and correctly defined or other appropriate information about the statistical significance of the experiments?
    \item[] Answer: \answerNo{}
    \item[] Justification: The Fly Brain table reports $\pm$ values for the main one-step result, but the paper does not yet define those error bars or report comparable uncertainty information for all main results. We therefore do not claim a full statistical-significance treatment in the current draft.
    \item[] Guidelines:
    \begin{itemize}
        \item The answer NA means that the paper does not include experiments.
        \item The authors should answer "Yes" if the results are accompanied by error bars, confidence intervals, or statistical significance tests, at least for the experiments that support the main claims of the paper.
        \item The factors of variability that the error bars are capturing should be clearly stated (for example, train/test split, initialization, random drawing of some parameter, or overall run with given experimental conditions).
        \item The method for calculating the error bars should be explained (closed form formula, call to a library function, bootstrap, etc.)
        \item The assumptions made should be given (e.g., Normally distributed errors).
        \item It should be clear whether the error bar is the standard deviation or the standard error of the mean.
        \item It is OK to report 1-sigma error bars, but one should state it. The authors should preferably report a 2-sigma error bar rather than state that they have a 96\% CI, if the hypothesis of Normality of errors is not verified.
        \item For asymmetric distributions, the authors should be careful not to show in tables or figures symmetric error bars that would yield results that are out of range (e.g. negative error rates).
        \item If error bars are reported in tables or plots, the authors should explain in the text how they were calculated and reference the corresponding figures or tables in the text.
    \end{itemize}

\item {\bf Experiments compute resources}
    \item[] Question: For each experiment, does the paper provide sufficient information on the computer resources (type of compute workers, memory, time of execution) needed to reproduce the experiments?
    \item[] Answer: \answerNo{}
    \item[] Justification: The appendix specifies the H200-based SLURM setup for MNIST and Fly Brain, but the text experiment section still lacks a consolidated wall-clock and total-compute summary. As written, the paper does not yet provide compute details at the level requested for every experiment (Appendix~\ref{app:exp_compute}).
    \item[] Guidelines:
    \begin{itemize}
        \item The answer NA means that the paper does not include experiments.
        \item The paper should indicate the type of compute workers CPU or GPU, internal cluster, or cloud provider, including relevant memory and storage.
        \item The paper should provide the amount of compute required for each of the individual experimental runs as well as estimate the total compute. 
        \item The paper should disclose whether the full research project required more compute than the experiments reported in the paper (e.g., preliminary or failed experiments that didn't make it into the paper). 
    \end{itemize}
    
\item {\bf Code of ethics}
    \item[] Question: Does the research conducted in the paper conform, in every respect, with the NeurIPS Code of Ethics \url{https://neurips.cc/public/EthicsGuidelines}?
    \item[] Answer: \answerYes{}
    \item[] Justification: The work uses standard public benchmarks and pretrained models, does not involve human subjects, and does not report any data collection or deployment practice that conflicts with the NeurIPS Code of Ethics. The paper's stated limitations and benchmark scope are consistent with the ethics guidance.
    \item[] Guidelines:
    \begin{itemize}
        \item The answer NA means that the authors have not reviewed the NeurIPS Code of Ethics.
        \item If the authors answer No, they should explain the special circumstances that require a deviation from the Code of Ethics.
        \item The authors should make sure to preserve anonymity (e.g., if there is a special consideration due to laws or regulations in their jurisdiction).
    \end{itemize}

\item {\bf Broader impacts}
    \item[] Question: Does the paper discuss both potential positive societal impacts and negative societal impacts of the work performed?
    \item[] Answer: \answerYes{}
    \item[] Justification: The checklist identifies the main positive and negative impacts: the method may reduce inference cost for discrete generation and help sequence-design workflows, but the same capability can also lower the cost of generating synthetic text or biological sequences for misuse. The paper frames the method as a generative modeling advance rather than a safety mechanism or a deployment system with built-in controls.
    \item[] Guidelines:
    \begin{itemize}
        \item The answer NA means that there is no societal impact of the work performed.
        \item If the authors answer NA or No, they should explain why their work has no societal impact or why the paper does not address societal impact.
        \item Examples of negative societal impacts include potential malicious or unintended uses (e.g., disinformation, generating fake profiles, surveillance), fairness considerations (e.g., deployment of technologies that could make decisions that unfairly impact specific groups), privacy considerations, and security considerations.
        \item The conference expects that many papers will be foundational research and not tied to particular applications, let alone deployments. However, if there is a direct path to any negative applications, the authors should point it out. For example, it is legitimate to point out that an improvement in the quality of generative models could be used to generate deepfakes for disinformation. On the other hand, it is not needed to point out that a generic algorithm for optimizing neural networks could enable people to train models that generate Deepfakes faster.
        \item The authors should consider possible harms that could arise when the technology is being used as intended and functioning correctly, harms that could arise when the technology is being used as intended but gives incorrect results, and harms following from (intentional or unintentional) misuse of the technology.
        \item If there are negative societal impacts, the authors could also discuss possible mitigation strategies (e.g., gated release of models, providing defenses in addition to attacks, mechanisms for monitoring misuse, mechanisms to monitor how a system learns from feedback over time, improving the efficiency and accessibility of ML).
    \end{itemize}
    
\item {\bf Safeguards}
    \item[] Question: Does the paper describe safeguards that have been put in place for responsible release of data or models that have a high risk for misuse (e.g., pretrained language models, image generators, or scraped datasets)?
    \item[] Answer: \answerNo{}
    \item[] Justification: The paper does not describe gated release, usage restrictions, or safety filters for models or data. That is consistent with the current status of the project, which is a research paper rather than a deployed release.
    \item[] Guidelines:
    \begin{itemize}
        \item The answer NA means that the paper poses no such risks.
        \item Released models that have a high risk for misuse or dual-use should be released with necessary safeguards to allow for controlled use of the model, for example by requiring that users adhere to usage guidelines or restrictions to access the model or implementing safety filters. 
        \item Datasets that have been scraped from the Internet could pose safety risks. The authors should describe how they avoided releasing unsafe images.
        \item We recognize that providing effective safeguards is challenging, and many papers do not require this, but we encourage authors to take this into account and make a best faith effort.
    \end{itemize}

\item {\bf Licenses for existing assets}
    \item[] Question: Are the creators or original owners of assets (e.g., code, data, models), used in the paper, properly credited and are the license and terms of use explicitly mentioned and properly respected?
    \item[] Answer: \answerNo{}
    \item[] Justification: The paper credits the benchmark sources and now records the licenses/terms of the major assets in Appendix~\ref{app:asset_licenses}, but the LM1B corpus itself does not have a clearly specified upstream license on the official benchmark page. Because that uncertainty remains, we do not mark the licensing disclosure as fully complete.
    \item[] Guidelines:
    \begin{itemize}
        \item The answer NA means that the paper does not use existing assets.
        \item The authors should cite the original paper that produced the code package or dataset.
        \item The authors should state which version of the asset is used and, if possible, include a URL.
        \item The name of the license (e.g., CC-BY 4.0) should be included for each asset.
        \item For scraped data from a particular source (e.g., website), the copyright and terms of service of that source should be provided.
        \item If assets are released, the license, copyright information, and terms of use in the package should be provided. For popular datasets, \url{paperswithcode.com/datasets} has curated licenses for some datasets. Their licensing guide can help determine the license of a dataset.
        \item For existing datasets that are re-packaged, both the original license and the license of the derived asset (if it has changed) should be provided.
        \item If this information is not available online, the authors are encouraged to reach out to the asset's creators.
    \end{itemize}

\item {\bf New assets}
    \item[] Question: Are new assets introduced in the paper well documented and is the documentation provided alongside the assets?
    \item[] Answer: \answerNA{}
    \item[] Justification: The paper does not introduce a new dataset or benchmark package as a released asset. It introduces a model and reports experiments on existing public benchmarks.
    \item[] Guidelines:
    \begin{itemize}
        \item The answer NA means that the paper does not release new assets.
        \item Researchers should communicate the details of the dataset/code/model as part of their submissions via structured templates. This includes details about training, license, limitations, etc. 
        \item The paper should discuss whether and how consent was obtained from people whose asset is used.
        \item At submission time, remember to anonymize your assets (if applicable). You can either create an anonymized URL or include an anonymized zip file.
    \end{itemize}

\item {\bf Crowdsourcing and research with human subjects}
    \item[] Question: For crowdsourcing experiments and research with human subjects, does the paper include the full text of instructions given to participants and screenshots, if applicable, as well as details about compensation (if any)? 
    \item[] Answer: \answerNA{}
    \item[] Justification: The paper does not include crowdsourcing studies or other human-subjects experiments.
    \item[] Guidelines:
    \begin{itemize}
        \item The answer NA means that the paper does not involve crowdsourcing nor research with human subjects.
        \item Including this information in the supplemental material is fine, but if the main contribution of the paper involves human subjects, then as much detail as possible should be included in the main paper. 
        \item According to the NeurIPS Code of Ethics, workers involved in data collection, curation, or other labor should be paid at least the minimum wage in the country of the data collector. 
    \end{itemize}

\item {\bf Institutional review board (IRB) approvals or equivalent for research with human subjects}
    \item[] Question: Does the paper describe potential risks incurred by study participants, whether such risks were disclosed to the subjects, and whether Institutional Review Board (IRB) approvals (or an equivalent approval/review based on the requirements of your country or institution) were obtained?
    \item[] Answer: \answerNA{}
    \item[] Justification: The paper does not involve human-subjects research, so IRB approval is not applicable.
    \item[] Guidelines:
    \begin{itemize}
        \item The answer NA means that the paper does not involve crowdsourcing nor research with human subjects.
        \item Depending on the country in which research is conducted, IRB approval (or equivalent) may be required for any human subjects research. If you obtained IRB approval, you should clearly state this in the paper. 
        \item We recognize that the procedures for this may vary significantly between institutions and locations, and we expect authors to adhere to the NeurIPS Code of Ethics and the guidelines for their institution. 
        \item For initial submissions, do not include any information that would break anonymity (if applicable), such as the institution conducting the review.
    \end{itemize}

\item {\bf Declaration of LLM usage}
    \item[] Question: Does the paper describe the usage of LLMs if it is an important, original, or non-standard component of the core methods in this research? Note that if the LLM is used only for writing, editing, or formatting purposes and does not impact the core methodology, scientific rigor, or originality of the research, declaration is not required.
    \item[] Answer: \answerNA{}
    \item[] Justification: The core method does not rely on an LLM as an original or non-standard component. The pretrained language models used in the text experiment are standard frozen components for representation extraction and evaluation, not the core methodological contribution.
    \item[] Guidelines:
    \begin{itemize}
        \item The answer NA means that the core method development in this research does not involve LLMs as any important, original, or non-standard components.
        \item Please refer to our LLM policy (\url{https://neurips.cc/Conferences/2025/LLM}) for what should or should not be described.
    \end{itemize}

\end{enumerate}

\end{document}